\newtheorem{theorem}{Theorem}
\newtheorem{lemma}{Lemma}
\begin{document}

	\title{Adversarial Reciprocal Points Learning for \\ Open Set Recognition}
	
	\author{Guangyao~Chen,
		Peixi~Peng, Xiangqian~Wang
		and~Yonghong~Tian,~\IEEEmembership{Senior~Member,~IEEE}
		\IEEEcompsocitemizethanks{\IEEEcompsocthanksitem G. Chen, P. Peng,and Y. Tian are
			with Department of Computer Science and Technology, Peking University, Beijing, China. P. Peng and Y. Tian are also with the Peng Cheng Laborotory, Shenzhen, China. \protect \\
			E-mail: \{gy.chen, pxpeng, yhtian\}@pku.edu.cn.
			\IEEEcompsocthanksitem X. Wang is with AI Application Research Center, Huawei, Shenzhen, China. E-mail: basileus.wang@huawei.com.}
	}
	
	\markboth{}%
	{Shell \MakeLowercase{\textit{et al.}}: Bare Demo of IEEEtran for Computer Society Journals}
	
	\IEEEtitleabstractindextext{%
		\begin{abstract}
			\justifying
			Open set recognition (OSR), aiming to simultaneously classify the seen classes and identify the unseen classes as 'unknown', is essential for reliable machine learning. 
			The key challenge of OSR is how to reduce the empirical classification risk on the labeled known data and the open space risk on the potential unknown data simultaneously. 
			To handle the challenge, we formulate the open space risk problem from the perspective of multi-class integration, and model the unexploited extra-class space with a novel concept \emph{Reciprocal Point}.
			Follow this, a novel learning framework, termed \textbf{A}dversarial \textbf{R}eciprocal \textbf{P}oint \textbf{L}earning (ARPL), is proposed to minimize the overlap of known distribution and unknown distributions without loss of known classification accuracy. 
			Specifically, each reciprocal point is learned by the extra-class space with the corresponding known category, and the confrontation among multiple known categories are employed to reduce the empirical classification risk.
			Then, an adversarial margin constraint is proposed to reduce the open space risk by limiting the latent open space constructed by reciprocal points.
			To further estimate the unknown distribution from open space, an instantiated adversarial enhancement method is designed to generate diverse and confusing training samples, based on the adversarial mechanism between the reciprocal points and known classes. This can effectively enhance the model distinguishability to the unknown classes. 
			Extensive experimental results on various benchmark datasets indicate that the proposed method is significantly superior to other existing approaches and achieves state-of-the-art performance. 
			The code is released on \href{https://github.com/iCGY96/ARPL}{github.com/iCGY96/ARPL}.
		\end{abstract}
		
		\begin{IEEEkeywords}
			Open Set Recognition, Out-of-Distribution Detection, Reciprocal Points, Generative Adversarial Learning
	\end{IEEEkeywords}}

	\maketitle
	
	\IEEEdisplaynontitleabstractindextext
	
	\IEEEpeerreviewmaketitle
	
	\IEEEraisesectionheading{\section{Introduction}\label{sec:introduction}}
	
	\IEEEPARstart{O}{ver} the past few years, deep learning has equaled and even surpassed human-level performances in many image recognition and classification tasks \cite{he2015delving}.
	These methods follow the closed set setting which assumes that all testing classes are known or seen in the training.
	However, in realistic applications, the knowledge of the classes is incomplete, and unknown classes may be submitted to an algorithm during testing.
	For example, an autonomous mobile agent such as a self-driving vehicle will probably encounter objects of unknown origin at some point during its lifecycle.
	Therefore, these superhuman performances with closed set settings are illusory, since open set recognition (OSR) \cite{scheirer2012toward} is the setting humans operate in, where they vastly outperform all current computer vision approaches.
	Hence, a robust recognition system should identify test samples as known or unknown, and correctly classify all test instances of seen or known classes simultaneously.
	
	The key to OSR is to reduce the empirical classification risk on labeled known data and the open space risk on potential unknown data simultaneously, where the open space risk is that of labeling the open space as "positive" for any known class \cite{scheirer2012toward}.
	Following this, a typical deep-learning-based baseline employs a linear classification layer and the softmax function on the embedding features to produce a probability distribution over the known classes.
	It typically assumes that the samples from the unknown classes should have a uniform probability distribution over the known classes. 
	As shown in Fig.~\ref{fig:intro_softmax}, softmax constructs several hyperplanes to separate the embedding feature space into different subspaces where each subspace corresponds to a known class.
	For the OSR task, the learned embedding features should be not only separable but also discriminative and sufficiently generalized to identify new unseen classes without label prediction. 
	However, softmax loss only encourages the separability of features, and cannot distinguish the known and unknown classes sufficiently well. 
	To make the features more discriminative, several methods \cite{yang2018robust, wen2016discriminative, Prototype} utilize a prototype to represent each known class in the embedding feature space and encourage the features of the training samples to be close to the corresponding prototypes. 
	As shown in Fig.~\ref{fig:intro_GCPL}, the learned prototypes may converge in the space of the unknown classes in the training, making the known and unknown classes indistinguishable. Overall, these two types of methods both only focus on the known data and ignore the potential characteristics of the unknown data, resulting in less effectiveness in reducing the open space risk.
	Contrary to the infinite unknown image space in Fig.~\ref{fig:image}, most unknown samples obtain lower deep magnitude features from the neural network \cite{dhamija2018reducing}, because these samples could not activate a model trained with finite known samples.
	This can also be observed from Fig.~\ref{fig:intro}. 
	Hence, we argue that not only the known classes but also the potential unknown deep space should be modeled in the training.
	
	\begin{figure*}[!tb]
		\centering
		\subfigure[Image-space]{
			\begin{minipage}[b]{0.28\linewidth}
				\centering
				\includegraphics[width=\linewidth]{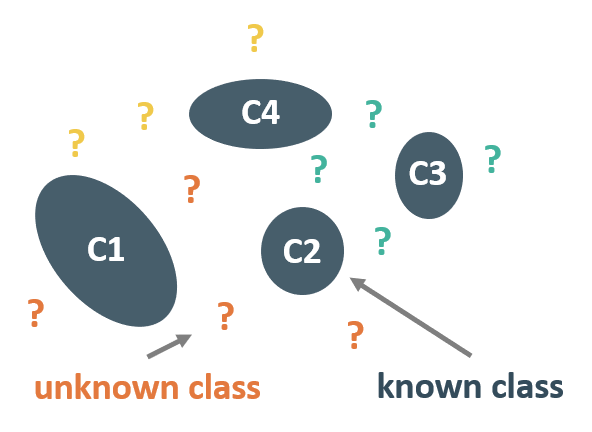}
				\label{fig:image}
			\end{minipage}%
		}%
		\subfigure[Softmax]{
			\begin{minipage}[b]{0.23\linewidth}
				\centering
				\includegraphics[width=\linewidth]{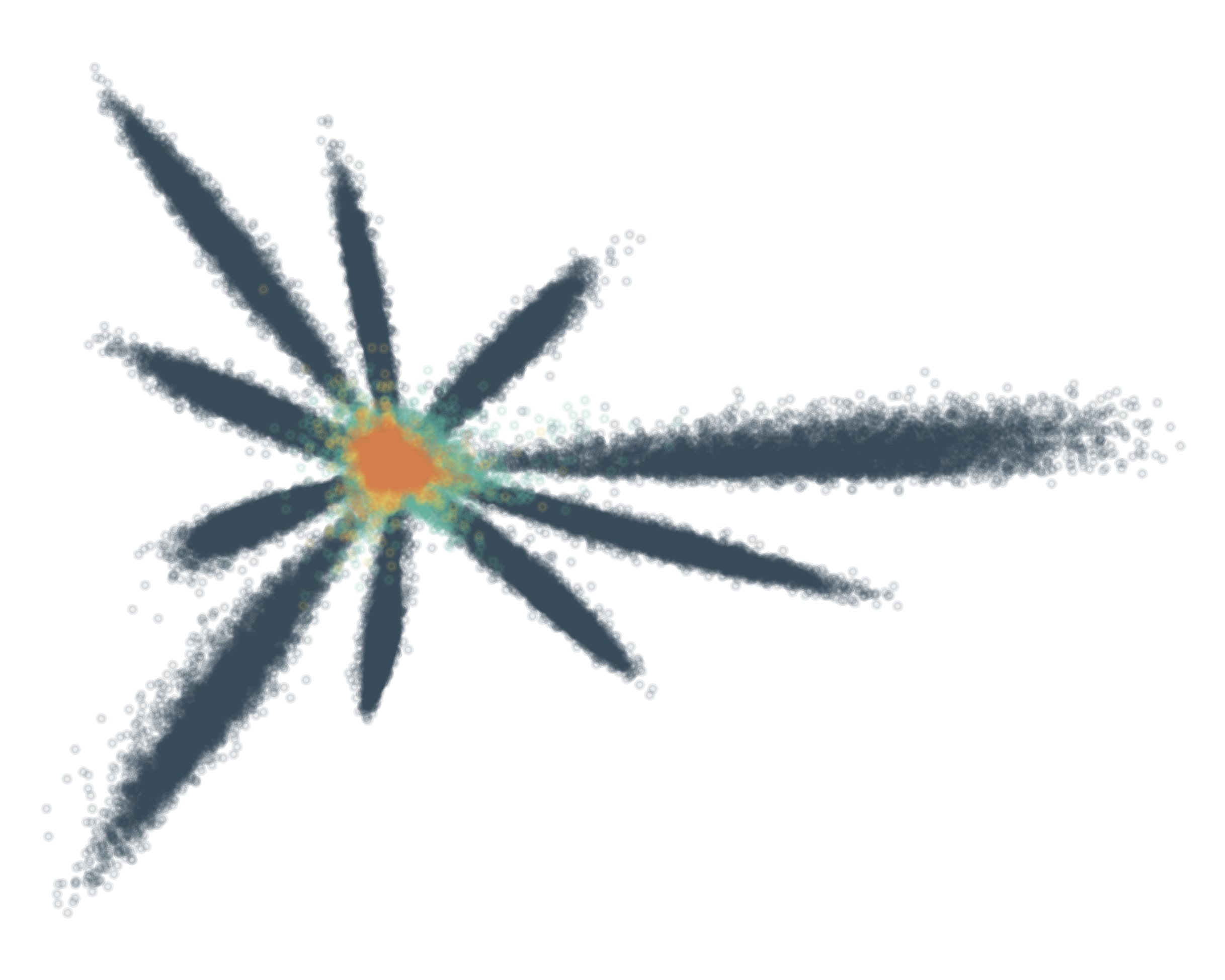}
				\label{fig:intro_softmax}
			\end{minipage}%
		}%
		\subfigure[Prototype Learning]{
			\begin{minipage}[b]{0.23\linewidth}
				\centering
				\includegraphics[width=\linewidth]{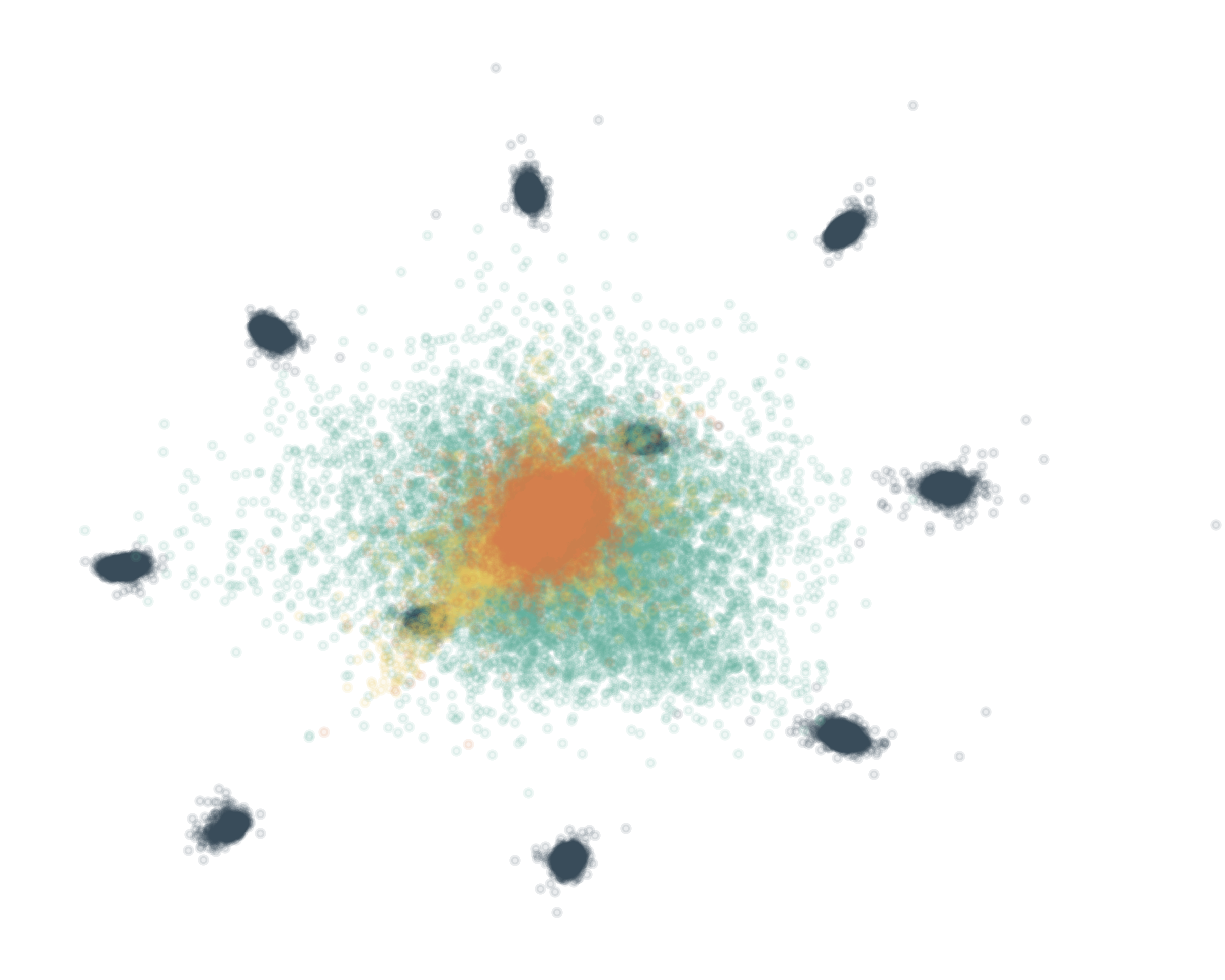}
				\label{fig:intro_GCPL}
			\end{minipage}%
		}%
		\subfigure[ARPL]{
			\begin{minipage}[b]{0.23\linewidth}
				\centering
				\includegraphics[width=\linewidth]{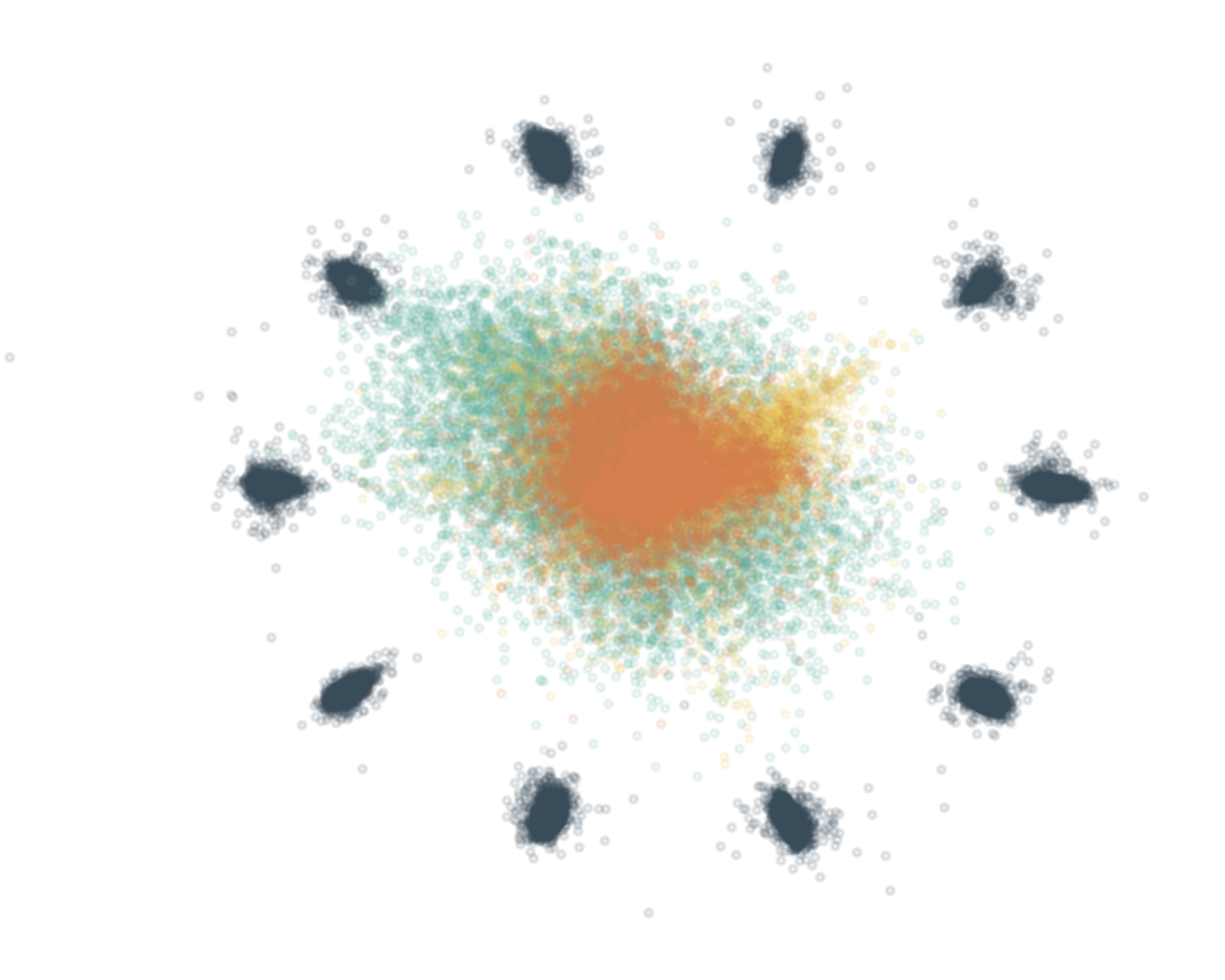}
				\label{fig:intro_ARPL}
			\end{minipage}%
		}%
		\caption{(a): Image-space has infinite open space, but deep responses of most unknowns distribute finite low magnitude areas of deep space \cite{dhamija2018reducing}. (b)-(d): \textbf{LENET++ DEEP RESPONSES TO KNOWNS AND UNKNOWNS.} MNIST (blue) is used for known training, and KMNIST (green), SVHN (yellow) and CIFAR-100 (orange) are used for open set evaluation, where their similarity with MNIST gradually decreased. The network in (b) was trained by Softmax, while the networks in (c) and (d) are trained with Prototype Learning \cite{yang2018robust} and our novel Adversarial Reciprocal Prototype Learning (ARPL). 
			This paper addresses how to improve recognition by reducing the overlap between the deep features from known samples and the features from different unknown samples. In an application, a score threshold should be chosen to optimally separate various unknown from known samples. Unfortunately, such a threshold is difficult to find for either (b) or (c). A better separation is achievable with (d).
		}
		\label{fig:intro}
	\end{figure*}

	To model the potential unknown space without corresponding samples, a novel concept, the \emph{Reciprocal Point} is proposed in this article. 
	Consider a straightforward case with only one known class such as \emph{cat} in Fig.~\ref{fig:RP}.
	\emph{How to identify a cat?}
	Most classification methods aim to learn \emph{"what is a \emph{cat}?"}, resulting in seeing only one spot in the whole problem space. In contrast, \emph{Reciprocal Points}, as potential representative features of \emph{non-cats}, identify cats by otherness. Here a reciprocal point is typically adverse to the prototype of a known class. All these \emph{Reciprocal Points} constitute an instantiated representation of the latent unexploited \emph{extraclass} space, which can potentially used to reduce uncertainty when solving the problem \emph{"what is a \emph{cat}?"} in an OSR setting.

	For the known category \emph{cat}, most unknown samples obviously belong to the space of non-\emph{cats} and their features should be more similar to the representation of non-\emph{cats}, which means that the corresponding unknown information is more implicit in each non-\emph{cats} embedding space.
	Therefore, a novel classification framework is proposed based on the confrontation between multiple known classes and their reciprocal points. It aims to enlarge the distance between the embedding features of the target class and the corresponding reciprocal points, as shown in Fig.~\ref{fig:overview}(a).
	We also formulate the open set risk from the perspective of multiclass integration.
	To reduce the open space risk for each known class from potentially unknown data, a novel adversarial margin constraint term is proposed to limit the extraclass embedding space in a bounded range by binding the target class and its reciprocal point.
	Furthermore, each known class belongs to the extra-class space of other classes.
	When multiple classes interact with each other in the training stage, all the known classes are not only pushed to the periphery of the space by the corresponding reciprocal points for classification, but also pulled into a certain bounded range by other reciprocal points with adversarial margin constraints. 
	Finally, as shown in Fig.~\ref{fig:overview}(b), all known classes are distributed around the periphery of the bounded embedding space, and the unknown samples are limited to the internal bounded space. 
	The bounded constraint prevents the neural network from generating arbitrarily high confidence for unknown samples. 
	Although only known samples are available during the training stage, the interval between known and unknown classes is separated indirectly by reciprocal points.

	To estimate the unknown distribution from the open space, a novel \emph{Instantiated Adversarial Enhancement} mechanism is proposed to generate confusing training samples, to enhance the model distinguishability of known and unknown classes. 
	Unlike the common Generative Adversarial Network (GAN) \cite{goodfellow2014generative}, the proposed method involves an additional adversarial strategy between the discriminator and classifier: 
	On the one hand, the generated samples should deceive the discriminator so that it judge them to be known samples; 
	on the other hand, the classifier's responses for the generated samples are encouraged to be close to each reciprocal point, as illustrated in Fig.~\ref{fig:overview}(c). This means that the generated samples should be as close to the open space of the classifier's embedding space as possible. 
	Finally, the generator, discriminator and classifier are trained jointly to achieve equilibrium.
	More diverse and confusing samples are generated during this process to promote the classifier to filter out most samples that are significantly different from the known samples.
	In addition, an auxiliary batch normalization module and a focusing training mechanism are developed to prevent the classifier from making confusing predictions due to the diverse generated samples.
	
	\begin{figure}[!tb]
		\centering
		\includegraphics[width=\linewidth]{./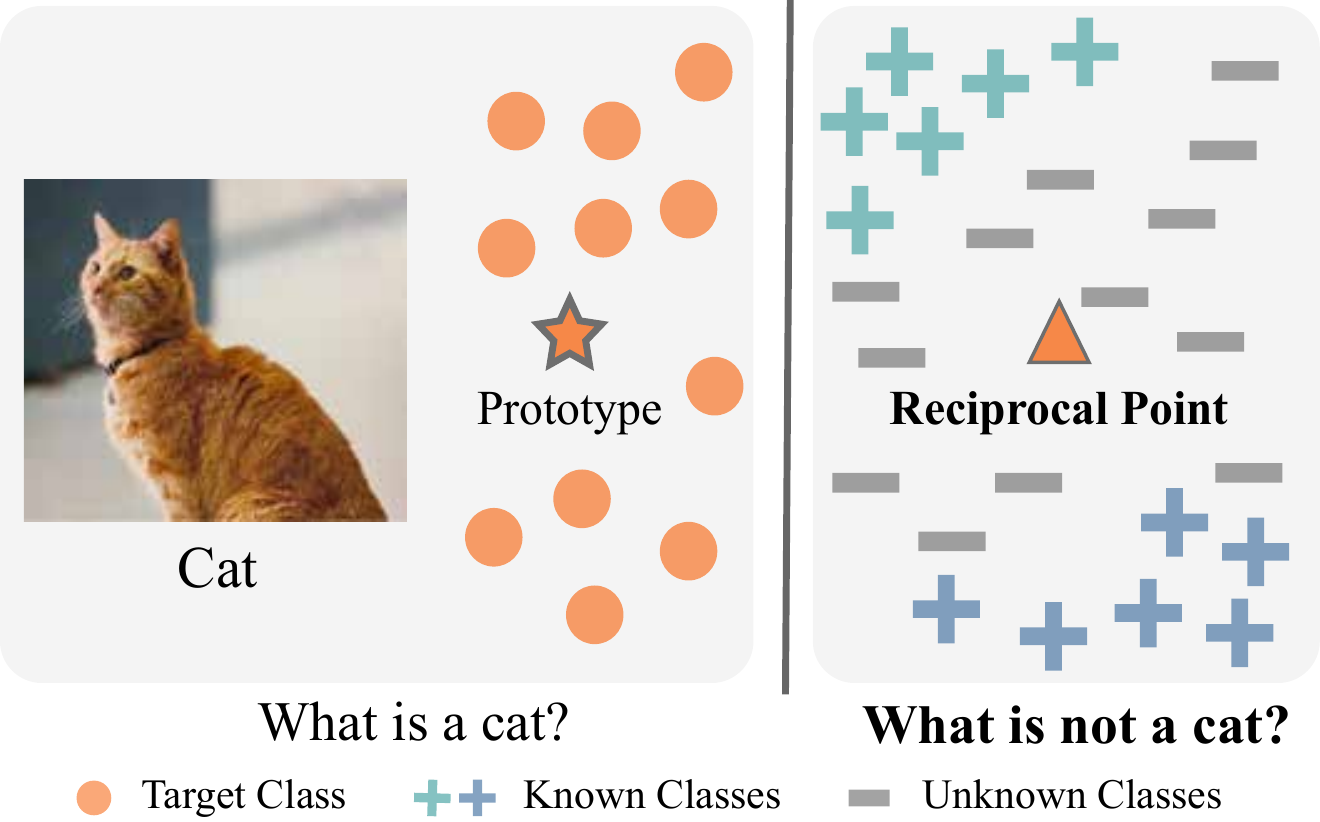}
		\caption{How to identify a cat in the OSR setting? Most methods focus on learning the potentially representative features of cats as prototypes. In contrast, \emph{Reciprocal Points}, as potentially representative features of \emph{non-cats}, identify the cat by otherness. Here these \emph{Reciprocal Points} constitute an instantiated representation of the \emph{extra-class} space, which can potentially be used to reduce the uncertainty when solving the OSR problem.}
		\label{fig:RP}
	\end{figure}
	
	Our contributions are summarized as follows: (1) The open space risk is formulated from the perspective of multiclass integration, by introducing a novel concept, the \emph{Reciprocal Point}, to model the latent open space for each known class in the feature space. (2) Based on reciprocal points with an adversarial margin constraint among multiple known categories, a classification framework is introduced to reduce the empirical classification risk and the open space risk. The rationality of the adversarial margin constraint is theoretically guaranteed by Theorem \ref{theorem:multi}. (3) To estimate the unknown distribution from the open space, in particular the indistinguishable part with known categories, a novel instantiated adversarial enhancement is designed to generate more diverse confusing training samples from the confrontation between the known data and reciprocal points.
	
	This study extends our ECCV spotlight paper \cite{chen_2020_ECCV} in several respects. 1) We develop a novel instantiated adversarial training strategy to enhance the model distinguishability for known and unknown classes by generating confusing training samples (Section \ref{sec:gan}). Experiments are conducted on several datasets to prove the effectiveness of the proposed method.
	2) We improve the preliminary method by incorporating the cosine of the angle to measure the distance between the known classes and their reciprocal points, which has intrinsic consistency with the classification loss (Section \ref{sec:classification}) and brings satisfactory performance improvement.
	3) The adversarial margin constraint is proposed to construct a more elastic bounded space for multiclass adversarial fusion, to learn a more discriminative feature space to identify various unknown distributions. We present theoretical analysis to prove its rationality in
	(Section \ref{sec:risk}).
	4) More qualitative and quantitative experiments are conducted to evaluate the effectiveness of the method, including the following: 
	(a) A more comprehensive metric, Open Set Classification Rate \cite{dhamija2018reducing}, is developed by considering both the distinction between known and unknown classes and the accuracy of known classes. It is more in line with the essence of open set recognition and is used to evaluate different algorithms in Section \ref{EXP:OSR}.
	(b) We add experiments on the out-of-distribution detection task in Section \ref{EXP:OOD}, and more state-of-the-art algorithms for open set recognition for comparison. (c) More visualizations illustrations and correlations are shown to enable better understanding of the embedding feature space for near-to-far unknown samples in Section \ref{EXP:AS}.
	
	\section{Related Work}
	
	\subsection{Open Set Recognition}
	Inspired by a classifier with rejection option \cite{bartlett2008classification, da2014learning, yuan2010classification}, Scheirer \emph{et al.} \cite{scheirer2012toward} defined OSR problem for the first time and proposed a base framework to perform training and evaluation. 
	In recent years, OSR has been surprisingly overlooked, though it has more practical value than the common closed set setting. 
	The few works on this topic can be broadly classified into two categories: discriminative models and generative models.
	
	\textit{Discriminative Methods}. 
	Before the deep learning era, several OSR works utilizing traditional machine learning methods were proposed.
	For example, Scheirer \emph{et al.} \cite{scheirer2014probability} and Jain \emph{et al.} \cite{jain2014multi} considered a distribution of decision scores for unknown detection based on extreme value models with the Support Vector Machines (SVMs).
	Rudd \emph{et al.} proposed extreme value machines \cite{rudd2018extreme} which modeled class-inclusion probabilities with an extreme value theory (EVT) based density function.
	Junior \emph{et al.} \cite{junior2017nearest} proposed an open set nearest neighbor method, which identified any test sample having low similarity to known samples.
	The similarity scores were calculated using the ratio of the distances between the nearest neighbors. Zhang \emph{et al.} \cite{zhang2017sparse} proposed a sparse-representation-based OSR method, which also used EVT to identify unknown samples by residual errors.
	Note that these methods usually do not scale well without careful feature engineering.
	Recently, deep neural networks (DNNs) were also introduced to the OSR task by Bendale \emph{et al.}  \cite{bendale2016towards}. 
	They proved that the threshold on the softmax probability does not yield a robust model for OSR.
	Openmax \cite{bendale2016towards} was then proposed to detect unknown classes by modeling the distance of the activation vectors.
	Shu \emph{et al.} \cite{shu2017doc} proposed a K-sigmoid activation-based method, which enabled the end-to-end training by eliminating outlier detectors outside the network. In these works, the sigmoid function did not have the \emph{compact abating property} \cite{scheirer2014probability}.
	This property may be activated by adding an infinitely distant input from all the training data, and thus its open space risk is not bounded \cite{yoshihashi2019classification}.
	
	\textit{Generative Methods}. 
	Unlike discriminative models, generator approaches generate unknown or known samples using GANs \cite{goodfellow2014generative}, autoencoders \cite{sun2020conditional} and flow-based Models \cite{zhang2020hybrid} to help the classifier to learn the decision boundary between known and unknown samples.
	Ge \emph{et al.} \cite{ge2017generative} proposed G-Openmax, a direct extension of Openmax, using generative models to synthesize unknown samples to train the network.
	Similar to the idea in \cite{zhang2017sparse}, Yoshihashi \emph{et al.} \cite{yoshihashi2019classification} proposed the CROSR model, which combined the supervised learned prediction and unsupervised reconstructive latent representation to redistribute the probability distribution.
	\cite{oza2019c2ae} proposed the C2AE model for OSR, using class conditional autoencoders to obtain the decision boundary from the reconstruction errors by EVT. Xin \emph{et al.} \cite{sun2020conditional} provided a conditional Gaussian distribution learning for the Variational Auto-Encoder (VAE) to detect unknowns and classify known samples by forcing different latent features to approximate different Gaussian models.
	Zhang \emph{et al.} \cite{zhang2020hybrid} proposed a composed of classifier and a flow-based density estimator into a joint embedding space.
	However, these methods did not consider the deep distribution of unknown classes in learners, resulting in potential open space risk.
	
	\subsection{Out-of-Distribution Detection}
	The OSR is naturally related to some other problem settings such as out-of-distribution (OOD) detection \cite{hendrycks17baseline}, outlier detection \cite{rozsa2017adversarial}, and novel detection \cite{perera2019ocgan}, etc.
	Considering the safety of AI systems, the detection of OOD examples was first introduced by Hendrycks \emph{et al.} in \cite{hendrycks17baseline}. Here 
	OOD detection is the detection of samples that do not belong to the training set but may appear during testing \cite{hendrycks17baseline}. 
	Hendrycks \emph{et al.} \cite{hendrycks17baseline} demonstrated that anomalous samples have a lower maximum softmax probability than in-distribution samples. 
	Liang \emph{et al.} \cite{liang2017enhancing} proposed ODIN to allow more effective detection by using temperature scaling and adding small perturbations to the input. 
	Lee \emph{et al.} \cite{lee2017training} utilized generative models to generate the most effective samples from the OOD samples and derived a new OOD score from this branch.
	Hendrycks \emph{et al.} \cite{hendrycks2018deep} proposed outiler exposure by using an auxiliary dataset to teach the network better representations for anomaly detection. 
	OOD detection is similar to the rejection of unknown classes in OSR, because they are both study the separation of in-distribution (known) and out-of-distribution (unknown) samples \cite{hendrycks17baseline,scheirer2012toward} and do not require the discriminator power for known classes.

	\subsection{Prototype Learning}
	A prototype is an average or best exemplar of a category, thus can provide a concise representation for the entire category of instances \cite{kuncheva1998nearest}. 
	The best-known prototype learning method is k-nearest-neighbors (KNN). In \cite{liu2001evaluation}, learning vector quantization (LVQ) was proposed to save the storage space and improve the computational efficiency of KNN. In most previous works, prototypes are learned by optimizing the self-defined object functions \cite{sato1998formulation}. 
	Recently, some methods have also combined prototype learning with a probabilistic model and a neural network for classification tasks. 
	Under the framework of neural networks, prototypes are learnable representations in the form of one or more latent vectors per class. 
	In \cite{bonilla2012discriminative}, the authors represented the input instance as a K-dimensional vector, modeled each component as a mixture of probabilities, and finally applied a probabilistic model to parameterize K-prototype patterns through the likelihood maximization.
	Wen \emph{et al.} \cite{wen2016discriminative} proposed a center loss to learn the centers of the deep features of each identity and used the centers to reduce intra-class variance. 
	Yang \emph{et al.} \cite{yang2018robust,Prototype} proposed the Generalized Convolutional Prototype Learning (GCPL) with a prototype loss, which was used as a regularization method to improve the intra-class compactness of the feature representation.
	For the OSR problem, the prototype helps to reduce intraclass distance of the known classes, but it ignores the potential characteristics of the unknown data, resulting in less effectiveness in reducing the open space risk.

	\begin{figure*}[!tb]
		\centering
		\includegraphics[width=0.98\linewidth]{./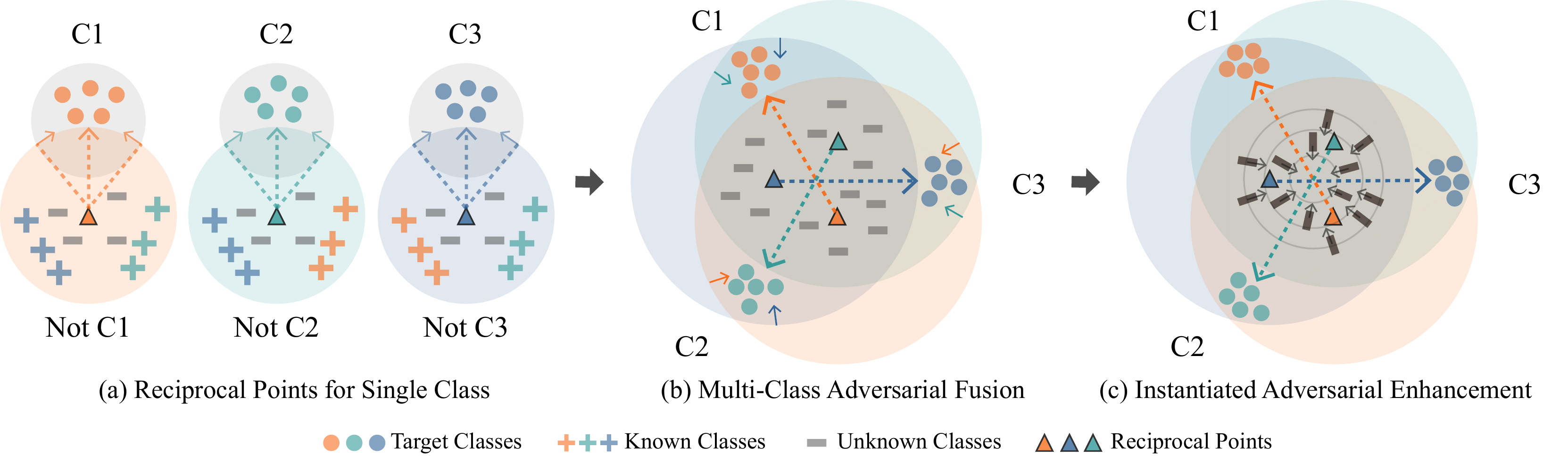}
		\caption{An overview of the proposed \textbf{Adversarial Reciprocal Point Learning} (ARPL) approach for open set recognition. 
			(a) \emph{Reciprocal Points for Single Class} promote each known class far away from their reciprocal points. 
			(b) \emph{Multi-Class Adversarial Fusion} induces the confrontation between multi-category bounded spaces constructed by reciprocal points. As a result, the known classes are pushed to the periphery of the feature space, and the unknown classes are limited in the bounded space.
			(c) \emph{Instantiated Adversarial Enhancement} generates more valid and more diverse confusing samples to promote the reliability of the classifier.
		}
		\label{fig:overview}
	\end{figure*}
	
	\section{Adversarial Reciprocal Point Learning}
	
	\subsection{Problem Definition}
	\label{notion}
	\noindent
	Given a set of $n$ labeled samples $\mathcal{D}_\mathcal{L}=\lbrace (x_1, y_1), \dots,$ $ (x_n, y_n) \rbrace$ with $N$ known classes, where $y_i \in \lbrace 1, \dots, N \rbrace$ is the label of $x_i$, and a larger amount of test data $\mathcal{D}_\mathcal{T}=\lbrace t_1, \dots, t_{u} \rbrace$ where the label of $t_i$ belongs to $\lbrace 1, \dots, N \rbrace \cup \lbrace N+1, \dots, N+U \rbrace $ and $U$ is the number of unknown classes in realistic scenarios, the deep embedding space of category $k$ is denoted by $\mathcal{S}_k$ and its corresponding \textbf{open space} is denoted as $\mathcal{O}_k$.
	To formalize and manage the open space risk effectively, $\mathcal{O}_k$ is separated into two subspaces: the positive open space from other known classes as $\mathcal{O}^{pos}_k$ and the remaining infinite unknown space as the negative open space $\mathcal{O}^{neg}_k$. That is, $\mathcal{O}_k = \mathcal{O}^{pos}_k \cup \mathcal{O}^{neg}_k$.
	
	In our method, the samples $\mathcal{D}^k_\mathcal{L} \in \mathcal{S}_k$ from category $k$, samples $\mathcal{D}^{\neq k}_\mathcal{L} \in \mathcal{O}^{pos}_k$ from other known classes, and samples $\mathcal{D}_\mathcal{U} \in \mathcal{O}^{neg}_k$ from $\mathbb{R}^d$ other than $\mathcal{D}_\mathcal{L}$, are defined as the positive training data, the negative training data and the potential unknown data respectively. The binary measurable prediction function $\psi_k: \mathbb{R}^d \mapsto \lbrace 0, 1 \rbrace $ is used to map the embedding $x$ to the label $k$. For the 1-class OSR problem, the overall goal is to optimize a discriminative binary function $\psi_k$ by minimizing the expected error $\mathcal{R}^k$:
	\begin{equation}
		\mathop{\arg\min}_{\psi_k} \lbrace \mathcal{R}^k \mid         
		\mathcal{R}_\epsilon(\psi_k, \mathcal{S}_k \cup \mathcal{O}^{pos}_k) + \alpha \cdot
		\mathcal{R}_o(\psi_k, \mathcal{O}^{neg}_k) \rbrace,
		\label{Eqn:1}
	\end{equation}
	where $\alpha$ is a positive regularization parameter, $\mathcal{R}_\epsilon$ is the empirical classification risk on the known data, and $\mathcal{R}_o$ is the \textbf{Open Space Risk} \cite{scheirer2012toward} that is used to measure the uncertainty of labeling the unknown samples as the known or unknown class. This is further formulated as a nonzero integral function on space $\mathcal{O}^{neg}_k$:
	\begin{equation}
		\mathcal{R}_o(\psi_k, \mathcal{O}^{neg}_k) = \frac{\int_{\mathcal{O}^{neg}_k} \ \psi_k(x) dx}{\int_{\mathcal{S}_k \cup \mathcal{O}_k} \ \psi_k(x) dx}.
		\label{Eqn:risk}
	\end{equation}
	The more often the open space $\mathcal{O}^{neg}_k$ is labeled as positive , the greater the open space risk is. 
	
	In the multiclass setting, the OSR problem is identified by integrating multiple binary classification tasks (\textit{one} \textit{vs.} \textit{rest}) (as shown in Fig.~\ref{fig:overview}). By summarizing the expected risk in Eq.\ \eqref{Eqn:1} among all known categories, i.e.,\ $\mathop{\sum}\nolimits_{k=1}^N \mathcal{R}^k$, we obtain
	\begin{equation}
		\mathop{\sum}_{k=1}^N \ \ \mathcal{R}_\epsilon(\psi_k, \ \mathcal{S}_k \cup \mathcal{O}^{pos}_k) + 
		\alpha \cdot \mathop{\sum}_{k=1}^N \ \ \mathcal{R}_o(\psi_k, \ \mathcal{O}^{neg}_k).
		\label{Eqn:3}
	\end{equation}
	Minimizing the left side of Eq.\ \eqref{Eqn:3} is equivalent to training multiple-binary classifiers, producing a multiclass prediction function $f=\odot(\psi_1, \psi_2, \dots, \psi_N)$ for $N$-category classification, where $\odot(\cdot)$ is the intergration operation.
	Hereafter, Eq.\ \eqref{Eqn:3} is formulated as:
	\begin{equation}
		\mathop{\arg\min}_{f \in \mathcal{H}} \ \lbrace 
		\mathcal{R}_\epsilon(f, \mathcal{D}_L) + \alpha \cdot 
		\mathop{\sum}\nolimits_{k=1}^N\mathcal{R}_o(f, \mathcal{D}_U) \rbrace
		\label{Eqn:2}
	\end{equation}
	where $f: \mathbb{R}^d \mapsto \mathbb{N}$ is a measurable multiclass recognition function, $\mathcal{D}_L$ is the set of labeled data used during the training phase, and $\mathcal{D}_U$ is the potentially unknown data.
	According to Eq.\ \eqref{Eqn:2}, solving the OSR problem is equivalent to minimizing the combination of the empirical classification risk on labeled known data and the open space risk on potential unknown data simultaneously, over the space of allowable recognition functions. This makes embedding function more distinguishable between known and unknown spaces. 
	
	\subsection{Reciprocal Points for Classification}
	\label{sec:classification}
	The \textbf{reciprocal point} $ \mathcal{P}^k$ of category $k$ is regarded as the latent representation of the subdataset $\mathcal{D}^{\neq k}_\mathcal{L} \cup \mathcal{D}_\mathcal{U}$. 
	Hence, the samples of $\mathcal{O}_k$ should be closer to the reciprocal point $ \mathcal{P}^k$ than the samples of $\mathcal{S}_k$, which is formulated as:
	\begin{equation}
		\max(\zeta(\mathcal{D}^{\neq k}_L \cup \mathcal{D}_U, \ \mathcal{P}^k))
		\ \leq \ d, \ \forall d \in \zeta(\mathcal{D}^k_L, \ \mathcal{P}^k), 
		\label{Eqn:r-point}
	\end{equation} 
	where $\zeta(\cdot, \ \cdot)$ calculates the set of distances of all samples between two sets. Based on Eq. \eqref{Eqn:r-point}, the samples can be classified by the opposition between the reciprocal points and the corresponding known classes.
	
	Specifically, the reciprocal point of a class is represented by an $m$-dimensional representation, and can be optimized by an deep embedding function $\mathcal{C}$ with learnable parameters $\theta$.
	Given sample $x$ and reciprocal point $\mathcal{P}_k$, their distance $d(\mathcal{C}(x), \mathcal{P}^k)$ is calculated by combining the Euclidean distance $d_e$ and dot product $d_d$:
	\begin{equation}
		\begin{aligned}
			d_e(\mathcal{C}(x), \mathcal{P}^k) &= \frac{1}{m} \cdot { \lVert \mathcal{C}(x) - \mathcal{P}^k \rVert}^2_2, \\
			d_d(\mathcal{C}(x), \mathcal{P}^k) &= \mathcal{C}(x) \cdot \mathcal{P}^k, \\
			d_{\ }(\mathcal{C}(x), \mathcal{P}^k) &= d_e(\mathcal{C}(x), p^k_i) - d_d(\mathcal{C}(x), \mathcal{P}^k).
			\label{Eqn:distance}
		\end{aligned}
	\end{equation} 
	Each known class is opposite to its reciprocal point in terms of both spatial position and angle direction.
	The combination of the Euclidean similarity and the dot product is capable of better evaluating the similarity between the embedding features of the known classes and their reciprocal points. 
	
	Based on the proposed distance metrics, our framework estimates the otherness between the embedding feature $\mathcal{C}(x)$ and the reciprocal points of all known classes to determine which category it belongs to. 
	Following the nature of reciprocal points, the probability of sample $x$ belonging to category $k$ is proportional to the otherness between $\mathcal{C}(x)$ and the reciprocal point $\mathcal{P}^k$, where a greater distance between $\mathcal{C}(x)$ and $ \mathcal{P}^k$ leads to assign the sample $x$ to label $k$ with a larger probability.
	According to the sum-to-one property, the final classification probability is normalized with the softmax function:
	\begin{equation}
		\label{Eqn:prob}
		p(y=k|x, \mathcal{C}, \mathcal{P}) = \frac{e^{\gamma d(\mathcal{C}(x), \  \mathcal{P}^k)}}{\sum\nolimits_{i=1}^{N}{e^{\gamma d(\mathcal{C}(x), \ \mathcal{P}^i)}}},
	\end{equation}
	where $\gamma$ is a hyperparameter that controls the hardness of the distance-probability conversion.
	The learning of $\theta$ is achieved by minimizing the reciprocal points classification loss based the negative log-probability of the true class $k$:
	\begin{equation}
		\mathcal{L}_c(x; \theta, \mathcal{P}) = - \ log \ p(y=k|x, \mathcal{C}, \mathcal{P}).
		\label{Eqn:loss}
	\end{equation}
	Through minimizing Eq. \eqref{Eqn:loss} which corresponds to $ \mathcal{R}_\epsilon(f, \mathcal{D}_L) $ in Eq. \eqref{Eqn:2}, the reciprocal points classification loss reduces the empirical classification risk through the reciprocal points.
	
	In addition to classifying the known classes, an advantage of minimizing Eq. \eqref{Eqn:loss} is to separate known and unknown spaces by maximizing the distance between the reciprocal points of the category and its corresponding training samples as follows:
	\begin{equation}
		\mathop{\arg\max}_{f \in \mathcal{H}} \ \lbrace 
		\zeta(\mathcal{D}_L^k, \mathcal{P}^k) \rbrace.
		\label{Eqn:maximize}
	\end{equation}
	
	Although Eq. \eqref{Eqn:loss} and Eq. \eqref{Eqn:maximize} facilitate the maximization of the interval between the closed space $\mathcal{S}_k$ and the center of the open space $\mathcal{O}_k$, $\mathcal{O}_k$ is not constrained in Eq. \eqref{Eqn:loss}. Hence, $\mathcal{S}_k$ and $ \mathcal{O}_k $ may have an inestimable overlap (as shown in Fig.~\ref{fig:ARPL_0}), meaning that the open space risk still exists.
	
	\subsection{Adversarial Margin Constraint}
	\label{sec:risk}
	
	To reduce the open space risk $\mathcal{R}_o(f, \mathcal{D}_U) $ in Eq. \eqref{Eqn:2}, a novel \emph{Adversarial Margin Constraint} (AMC) is proposed to constrain the open space, where each particular category $k$ contains the positive open space $\mathcal{O}_k^{pos}$ and the infinite negative open space $\mathcal{O}_k^{neg}$. 
	For multiclass OSR scenarios, multiple class-wise open spaces are united into a global open space $\mathcal{O}_G$:
	\begin{equation}
		\mathcal{O}_G = \bigcap\nolimits^{N}_{k=1} (\mathcal{O}_k^{pos} \cup \mathcal{O}_k^{neg} ),
		\label{Eqn:OG}
	\end{equation}
	where the total open space risk can be restricted by limiting the open space risk for each known class. 
	
	To separate $\mathcal{S}_k$ and $\mathcal{O}_k$ as much as possible, 
	the open space $\mathcal{O}_k$ must be restricted so that the open set space can be estimated. We aim to reduce the open space risk of each known class by limiting the open space $\mathcal{O}_k$ in a bounded range. This has the effect facilitating an increase in the maximum value of the distance between the negative/unknown data and reciprocal points less than $R$. Hence, the following formula is established: 
	\begin{equation}
		\max(\zeta(\mathcal{D}^{\neq k}_L \cup \mathcal{D}_U, \ \mathcal{P}^k)) 
		\ \leq \ \
		R.
		\label{Eqn:reg}
	\end{equation} 
	Clearly, it is almost impossible to manage the open space risk by restricting the open space, because the open space contains a large number of unknown samples $\mathcal{D}_U$. 
	However, considering that spaces $\mathcal{S}_k$ and $\mathcal{O}_k$ are complementary to each other, the open space risk can be bounded indirectly by constraining the distance between the samples from $\mathcal{S}_k$ and the reciprocal points $\mathcal{P}^k$ to be smaller than $R$ as follows:
	\begin{equation}
		\mathcal{L}_o(x; \theta, \mathcal{P}^k, R^k) = max(d_e(\mathcal{C}(x), \mathcal{P}^k) - R, 0),
		\label{Eqn:regularization}
	\end{equation}
	where $R$ is a learnable margin and only the Euclidean distance is used instead of a directional metric to obtain a larger range of non-$k$ samples.
	Specifically, minimizing Eq. \eqref{Eqn:regularization} by the classification loss $\mathcal{L}_c$ is equivalent to making $\zeta(\mathcal{D}^{\neq k}_L \cup \mathcal{D}_U, \ \mathcal{P}^k)$ in Eq. \eqref{Eqn:r-point} as smaller as possible compared with $R$. Here we use a theorem to better illustrate the advantage of our method.
	
	\begin{theorem} \label{theorem:multi}
		For a neural network whose logit layer is based on reciprocal points, $x \in \mathcal{D}_L^k$, $\mathcal{L}_c$ and $\mathcal{L}_o$ are minimized simultaneously if and only if $\max(\zeta(\mathcal{D}^{\neq k}_L, \ \mathcal{P}^k)) \ \leq \ \ R$.
	\end{theorem}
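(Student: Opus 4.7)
The plan is to split the biconditional in two and leverage the complementary roles of the losses: $\mathcal{L}_o$ in Eq.~\eqref{Eqn:regularization} directly upper-bounds the Euclidean distance of a same-class sample to its reciprocal point, whereas $\mathcal{L}_c$ in Eq.~\eqref{Eqn:loss} (via the softmax of Eq.~\eqref{Eqn:prob}) induces an ordering on the composite distances $d = d_e - d_d$ across the $N$ reciprocal points. Combining a same-class $R$-bound with a cross-class ordering is what propagates the bound from $\mathcal{D}_L^k$ to $\mathcal{D}_L^{\neq k}$.

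\textbf{Forward direction $(\Rightarrow)$.} Assume both losses are at their joint minimum. Fix $k$ and any $x' \in \mathcal{D}_L^{\neq k}$ with true label $j \neq k$, so that $x' \in \mathcal{D}_L^{j}$. Applying $\mathcal{L}_o$ at $(x', \mathcal{P}^j)$ yields $d_e(\mathcal{C}(x'), \mathcal{P}^j) \leq R$ since the hinge vanishes at the minimum. Applying $\mathcal{L}_c$ at $x'$ concentrates the softmax on label $j$, which translates into $d(\mathcal{C}(x'), \mathcal{P}^j) \geq d(\mathcal{C}(x'), \mathcal{P}^k)$. Unfolding $d = d_e - d_d$ via Eq.~\eqref{Eqn:distance} and using that, at the minimum, $\mathcal{C}(x')$ is driven antipodal to its own $\mathcal{P}^j$ but not to $\mathcal{P}^k$, the dot-product differential is nonpositive and the ordering transfers to $d_e(\mathcal{C}(x'), \mathcal{P}^k) \leq d_e(\mathcal{C}(x'), \mathcal{P}^j) \leq R$. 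Taking the maximum over $x' \in \mathcal{D}_L^{\neq k}$ gives the claimed bound.

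\textbf{Reverse direction $(\Leftarrow)$.} Assume $\max(\zeta(\mathcal{D}_L^{\neq k}, \mathcal{P}^k)) \leq R$ for every $k$. Since each $x \in \mathcal{D}_L^k$ lies in $\mathcal{D}_L^{\neq k'}$ for every $k' \neq k$, the hypothesis bounds all cross-class Euclidean distances within $R$. Combined with the maximization built into $\mathcal{L}_c$, which by Eq.~\eqref{Eqn:maximize} pushes $\mathcal{C}(x)$ antipodal to $\mathcal{P}^k$ while keeping it within the surrounding $R$-balls, the hinge of $\mathcal{L}_o$ closes and the softmax of Eq.~\eqref{Eqn:prob} peaks at the correct label, so both losses simultaneously attain their infima.

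\textbf{Main obstacle.} The delicate step is the coupling between the Euclidean and dot-product pieces of $d$: $\mathcal{L}_o$ controls only $d_e$, while $\mathcal{L}_c$ orders the hybrid $d_e - d_d$, so na\"ively a large $d_e(\mathcal{C}(x'), \mathcal{P}^k)$ could be masked by a compensating $d_d$. The crux of the forward argument is therefore to argue rigorously that at equilibrium the dot-product terms align in the favorable direction---the embedding is antipodal to its own reciprocal point while remaining non-antipodal to the others---so that the ordering on $d$ genuinely implies the ordering on $d_e$. Making this geometric alignment precise, rather than assuming it, is where the bulk of the technical work would lie.
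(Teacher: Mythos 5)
Your overall strategy is the same one the paper uses: the paper proves the forward implication by contradiction (a cross-class sample $s$ of class $t$ with $\zeta(s,\mathcal{P}^k)>R$ would, because minimizing $\mathcal{L}_o$ forces $\zeta(s,\mathcal{P}^t)\leq R$, satisfy $\zeta(s,\mathcal{P}^t)<\zeta(s,\mathcal{P}^k)$ and hence be assigned to class $k$ by Eq.~\eqref{Eqn:prob}, inflating $\mathcal{L}_c$), and your forward direction is precisely the contrapositive of that argument, built from the same two ingredients. Where you differ is that you try to be honest about the fact that $\mathcal{L}_o$ controls $d_e$ while Eq.~\eqref{Eqn:prob} orders the hybrid $d=d_e-d_d$, a distinction the paper silently erases by measuring everything with a single generic $\zeta$. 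Unfortunately your bridge between the two metrics has the sign backwards: the $\mathcal{L}_c$-ordering rearranges to $d_e(\mathcal{C}(x'),\mathcal{P}^k)\leq d_e(\mathcal{C}(x'),\mathcal{P}^j)+\bigl(d_d(\mathcal{C}(x'),\mathcal{P}^k)-d_d(\mathcal{C}(x'),\mathcal{P}^j)\bigr)$, and your own premise that $\mathcal{C}(x')$ is antipodal to $\mathcal{P}^j$ but not to $\mathcal{P}^k$ makes the bracketed term \emph{nonnegative} (it subtracts a very negative dot product), so the chain does not close and $d_e(\mathcal{C}(x'),\mathcal{P}^k)\leq R$ does not follow. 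You flag this as the ``main obstacle'' and defer it, which means the decisive step of your forward direction is not actually established; the paper's own proof, for what it is worth, never confronts this issue at all.

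Your reverse direction also misfires. The hypothesis $\max(\zeta(\mathcal{D}_L^{\neq k},\mathcal{P}^k))\leq R$ bounds the distance of a class-$j$ sample to the \emph{other} reciprocal points $\mathcal{P}^{k}$ with $k\neq j$, whereas $\mathcal{L}_o$ in Eq.~\eqref{Eqn:regularization} penalizes the distance to the sample's \emph{own} reciprocal point $\mathcal{P}^j$, about which the hypothesis says nothing; moreover $\mathcal{L}_c$ via Eq.~\eqref{Eqn:maximize} pushes $d_e(\mathcal{C}(x),\mathcal{P}^j)$ \emph{up}, which tends to open the hinge rather than close it. So ``the hinge of $\mathcal{L}_o$ closes'' does not follow from your premises. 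The paper only argues one direction and then asserts the biconditional, so neither text genuinely delivers the ``only if''; but your version supplies an affirmatively incorrect mechanism for it rather than merely omitting the argument, and that should be repaired or removed.
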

	
	\begin{proof}
		
		We give a proof by contradiction. 
		\begin{itemize}
			\item For $ x \in \mathcal{D}_L^{k}$, we assume that there is a sample of category $t$: $s \in \mathcal{D}^{t}$, where $t \neq k$, and that $\zeta(s, \ \mathcal{P}^k) > \ R$.
			\item For such a sample, we can make the following inference:
			\begin{itemize}
				\item First, minimizing $\mathcal{L}_c$ maximizes the distance between each category $k$ and its $\mathcal{P}^k$.
				\item Second, the loss $\mathcal{L}_o$ is minimized when $\forall k \in \{1,\dots,N\}, \max(\zeta(\mathcal{D}^{k}_L, \ \mathcal{P}^k)) \leq R$ in Eq. \eqref{Eqn:regularization}. The loss $\mathcal{L}_o$ is minimized, so $\zeta(s, \ \mathcal{P}^t) \leq \ R$.
				\item Then, $\zeta(s, \ \mathcal{P}^t) < \zeta(s, \ \mathcal{P}^k)$. Samples $s$ is classified into category $k$ in Eq. \eqref{Eqn:prob}, increasing the loss $\mathcal{L}_c$.
			\end{itemize}
			\item These consequences contradict what we have just assumed that sample $s$ belongs to category $t$.
			\item As a result, for $x \in \mathcal{D}_L^k$, $\mathcal{L}_c$ and $\mathcal{L}_o$ are minimized simultaneously if and only if $\max(\zeta(\mathcal{D}^{\neq k}_L, \ \mathcal{P}^k)) \ \leq \ \ R$.
		\end{itemize}
	\end{proof}
	
	\begin{algorithm}[!tb]
		\fontsize{8}{8}
		\renewcommand{\algorithmicrequire}{\textbf{Input:}}
		\renewcommand{\algorithmicensure}{\textbf{Output:}}
		\caption{The adversarial reciprocal point learning algorithm.}
		\label{alg:1}
		\begin{algorithmic}[1]
			\REQUIRE Training data $\{ x_i \}$. Initialized parameters $\theta$ in the convolutional layers, and parameters $\mathcal{P}$ and $R$ in the loss layers, respectively. Hyperparameter $ \lambda, \gamma $ and learning rate $ \mu $. The number of iteration $t \leftarrow 0$.
			\ENSURE The parameters $\theta$, $\mathcal{P}$ and $R$.
			\STATE while \textit{not converge} do
			\STATE  \qquad $t \leftarrow t + 1$.
			\STATE  \qquad Compute the joint loss by $ \mathcal{L}^t = \mathcal{L}_c^t + \lambda \cdot \mathcal{L}_o^t $.
			\STATE  \qquad Compute the backpropagation error $\frac{\partial \mathcal{L}^t}{\partial x^t}$ for each $i$ by $\frac{\partial \mathcal{L}^t}{\partial x^t} = \frac{\partial \mathcal{L}_c^t}{\partial x^t} + \lambda \cdot \frac{\partial \mathcal{L}_o^t}{\partial x^t}$.
			\STATE  \qquad Update the parameters $\mathcal{P}$ by $\mathcal{P}^{t+1} = \mathcal{P}^{t} - \mu^t \cdot \frac{\partial \mathcal{L}^t}{\partial \mathcal{P}^t} = \mathcal{P}^{t} - \mu^t \cdot (\frac{\partial \mathcal{L}_c^t}{\partial \mathcal{P}^t} + \lambda \cdot \frac{\partial \mathcal{L}_o^t}{\partial \mathcal{P}^t})$.
			\STATE  \qquad Update the parameters $R$ by $R^{t+1} = R^{t} - \mu^t \cdot \frac{\partial \mathcal{L}^t}{\partial R^t} = R^{t} - \lambda \cdot \mu^t \cdot \frac{\partial \mathcal{L}_o^t}{\partial R^t}$.
			\STATE \qquad Update the parameters $\theta$ by $\theta^{t+1} = \theta^{t} - \mu^t \cdot \frac{\partial \mathcal{L}^t}{\partial \theta^t} = \theta^{t} - \mu^t \cdot (\frac{\partial \mathcal{L}_c^t}{\partial {\theta}^t} + \lambda \cdot \frac{\partial \mathcal{L}_o^t}{\partial \theta^t})$.
			\STATE end while
		\end{algorithmic}  
	\end{algorithm}
	
	Theorem \ref{theorem:multi} further indicates that Eq. \eqref{Eqn:reg} can be obtained by limiting the target known class as in Eq. \eqref{Eqn:regularization} with the classification loss $\mathcal{L}_c$. If the target class $k$ and its reciprocal points are contained in a bounded range, the extraclass of class $k$ (including other known classes and potential open space) is also constrained into a bounded space.
	With such multicategory interactions, the known categories are constrained to each other. 
	On the one hand, the former classification loss in Eq. \eqref{Eqn:maximize} can be expected to increase the distance between class $k$ and its reciprocal point $\mathcal{P}^{k}$.
	On the other hand, the class $k$ is bounded by the other reciprocal points $\mathcal{P}^{\neq k}$ as follows:
	\begin{equation}
		\mathop{\arg\min}_{f \in \mathcal{H}} \ \lbrace 
		\max(\{\zeta(\mathcal{D}_L^k, \mathcal{P}^{\neq k}) - R\} \cup \{0\}) \rbrace.
		\label{Eqn:risk2}
	\end{equation}
	Through this adversarial mechanism between Eq. \eqref{Eqn:maximize} and Eq. \eqref{Eqn:risk2}, each known class is pushed to the edge of the finite feature space to the maximum extent, moving each far away from its potential unknown space. 
	
	In addition, considering the bounded space $\mathcal{B}(\mathcal{P}^k, R)$ with the reciprocal points $\mathcal{P}^k$ as centers and $R$ as its corresponding intervals, to separate the known and unknown space, we utilize these bounded spaces to approximate the global unknown space $\mathcal{O}_G$ as closely as possible. As a result, the calculation of the loss in Eq. \eqref{Eqn:regularization} can be viewed as reducing the open space risk $\mathcal{R}_o(f, \mathcal{D}_U)$ in Eq. \eqref{Eqn:2}.
	
	
	\subsection{Learning the Open Set Network}
	
	In adversarial reciprocal points learning, the overall loss function combines Eq. \eqref{Eqn:loss} and Eq. \eqref{Eqn:regularization} to handle the empirical classification risk and the open space risk simultaneously:
	\begin{equation}
		\mathcal{L}(x,y; \theta, \mathcal{P}, R) = \mathcal{L}_c(x;\theta, \mathcal{P}) + \lambda \mathcal{L}_o(x; \theta, \mathcal{P}, R),
		\label{Eqn: final-loss}
	\end{equation}
	where $\lambda$ is the weight of the adversarial open space risk module and $\theta, \mathcal{P}, R$ represent the learnable parameters.
	Alg. \ref{alg:1} summarizes the learning details of the open set network with joint supervision. Some additional explanations are also given here.

	Firstly, we discuss \emph{Unknown Classes for the Neural Network.}
	Based on the principle of maximum entropy, for an unknown sample $x_u$ without any prior information, a well-trained closed set discriminant function tends to assign known labels to $x_u$ with a uniform probability. DNNs usually embed the features of unknown samples into spaces with lower magnitudes rather than random positions in the full space. 
	This phenomenon is also consistent with the observation of \cite{dhamija2018reducing} and the visualization results shown in Fig.~\ref{fig:intro}. 
	For real images space, "\emph{All positive examples are alike; each negative example is negative in its own way}" \cite{scheirer2012toward}.
	However, the various negative examples (unknown) are not provided for training the neural network, so as that these samples obtain lower activation magnitudes from the neural network than positive samples (known).
	Hence, most unknown classes are distributed in low-magnitude area of the deep feature space, as shown in Fig.~\ref{fig:intro} and \ref{fig:retrieval}.
	Furthermore, the learned reciprocal points are also distributed in the low-magnitude areas and push the known classes away from the low-magnitude areas, so as to distinguish the known from the unknown, as shown in Fig.~\ref{fig:embedding}.
	As illustrated in Fig.~\ref{fig:retrieval}, the retrieved images gradually become different as we move from the class center to its reciprocal point.
	The learned reciprocal points and the unknown classes have more similarities in the deep feature space.
	Actually, it could not find the specific realistic sample of the reciprocal point.
	Basically, the differences among a large number of unknown classes are still unknown to the classifier.
	Most unknown classes have great commonality for a classifier, and this part of commonness is "\emph{unknown}".
	
	Second, we discuss \emph{Unknown Classes and Reciprocal Points.}
	Since the global open space $\mathcal{O}_G$ is more aggregated, the open set space is able to be constrained through reciprocal points in the deep embedding space.
	As shown in Fig.~\ref{fig:overview} and Fig.~\ref{fig:retrieval}, learning with Eq. \eqref{Eqn: final-loss} pushes the known spaces to the periphery of $\mathcal{O}_G$ and then separates two spaces as much as possible. As a result, an excellent embedding space structure is formed via adversarial reciprocal point learning (ARPL), which can be used to further divide the known classes and most unknown classes.
	
	\begin{figure}[!tb]
		\centering
		\subfigure[ARPL]{
			\begin{minipage}[t]{0.48\linewidth}
				\centering
				\includegraphics[width=\linewidth]{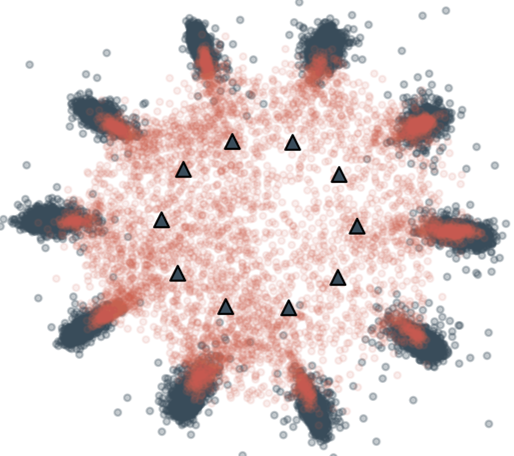}
				\label{fig:osg_ARPL}
			\end{minipage}%
		}
		\subfigure[ARPL+CS]{
			\begin{minipage}[t]{0.48\linewidth}
				\centering
				\includegraphics[width=\linewidth]{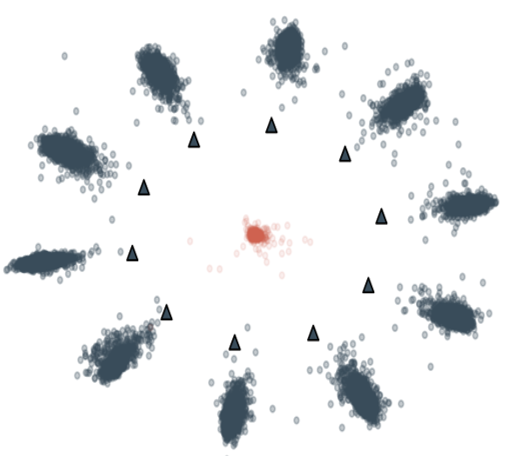}
				\label{fig:osg_ARPL_OS}
			\end{minipage}%
		}
		\caption{The visualization of the feature responses of the neural network to the samples from the confused generator. (a) is trained only by ARPL. (b) is trained by ARPL+CS. The red points represent the embedding of the confusing samples generated through instantiated adversarial enhancement.}
		\label{fig:osg}
	\end{figure}
	
	\section{Instantiated Adversarial Enhancement}
	\label{sec:gan}
	As shown in Fig.~\ref{fig:osg_ARPL}, the ARPL classifier is able to distinguish an unknown distribution without any prior knowledge of the unknown data, but is still vulnerable to the confusing samples generated from a simple generator, even when these samples are quite different from the known categories. To further reduce the open space risk caused by such unknown data, a good solution requires to minimizing the open space from the learned neural networks, with the support of a reasonable optimization strategy for unknown data.
	However, it is a haystack to find valid unknown samples in a real scene.
	Therefore, we further generate the \emph{Confusing Samples} (CS) as unknown data $\mathcal{D}_U$ to improve the discriminability of the classifier for various novel distributions.
	
	\begin{figure}[!tb]
		\centering
		\includegraphics[width=0.95\linewidth]{./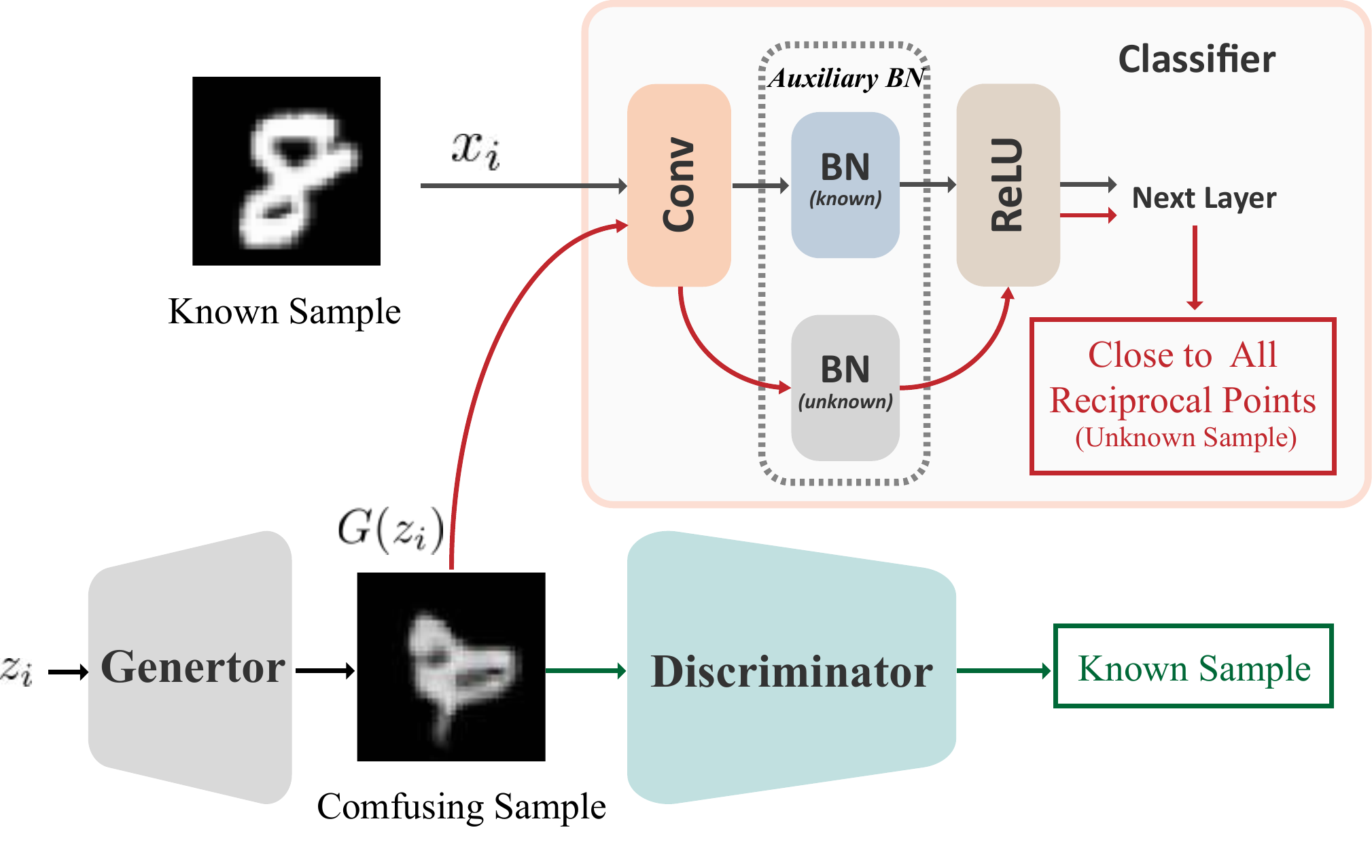}
		\caption{The basic framework of training the confused generator. Here, the generator maps a latent variable $z_i$ to the generated confusing sample $G(z_i)$, and the discriminator focuses on discriminating the real and generated samples. The classifier with ABN is trained by ARPL and FT. An adversarial mechanism between the known classes and reciprocal points is introduced here. On the one hand, the generated images let the discriminator identify positive samples (i.e., those close to known samples). On the other hand, the generated images should be samples unknown to the classifier, so the embedding feature of the neural network with respect to these samples is close to all reciprocal points.}
		\label{fig:GAN}
	\end{figure}
	
	\subsection{Learning the Confused Generator}
	
	Here, a new training strategy is proposed to learn a confused generator.
	Unlike the common GAN \cite{goodfellow2014generative}, we want to employ the generator to recover some confusing samples from $\mathcal{O}_G$ instead of known samples from $\mathcal{S}_k$. 
	As shown in Fig.~\ref{fig:GAN}, the proposed instantiated adversarial enhancement framework contains three main components: the discriminator $D$, the generator $G$, and the classifier $C$ with a deep embedding function $\mathcal{C}$. 
	The classifier with ARPL represents the probability that the sample belongs to each known category.
	The generator maps a latent variable $z$ from a prior distribution $P_{pri}(z)$ to the generated outputs $G(z)$, and the discriminator $D: \mathcal{X} \to \left[ 0, 1 \right]$ represents the probability of sample $x$ being from the real distribution or a fake distribution. 
	Then, given $ \{ z_{1}, \cdots, z_n \} $ from the prior $P_{pri} (z)$ and the known samples $ \{ x_1, \cdots, x_n \}$, the discriminator is optimized to discriminate the real and generated samples:
	\begin{equation}
		\max_{D}{\frac{1}{n}\sum^n_{i=1}[\log{D(x_i)}+\log(1-D(G(z_i)))]}.
		\label{Eqn:discriminator}
	\end{equation}
	In contrast, the generator expects the generated samples to be closer to the known classes so as to deceive the discriminator:
	\begin{equation}
		\max_{G}{\frac{1}{n}\sum^n_{i=1}[\log{D(G(z_i))}]}.
		\label{Eqn:generator1}
	\end{equation}
	
	In order to confuse the generator, an adversarial mechanism between the known classes and reciprocal points is introduced here. 
	It encourages the generator to create samples close to each center $\mathcal{P}^k$ of the open space $\mathcal{O}_k$.
	Similar to Eq. \eqref{Eqn:OG}, it is equivalent to encouraging the generated images to be close to the global open space $\mathcal{O}_G$. 
	Formally, the generator is optimized through the classifier:
	\begin{equation}
		\max_{G} \ \frac{1}{n}\sum^n_{i=1}[-\frac{1}{N}\sum^{N}_{k=1} S(z_i, \mathcal{P}^k) \cdot log(S(z_i, \mathcal{P}^k))],
		\label{Eqn:generator2}
	\end{equation}
	where $S(z_i, \mathcal{P}^k) = softmax(d_e(\mathcal{C}(G(z_i)), \mathcal{P}^k))$. The maximum value of Eq. \eqref{Eqn:generator2} for confusing samples is achieved when the embedding of these  samples are close to all reciprocal points. A theorem is introduced to better illustrate this.
	
	\begin{lemma}
		\label{lemma:entropy}
		For a neural network whose logit layer is based on reciprocal points and $x = G(z)$, Eq. \eqref{Eqn:generator2} is maximized when the distances between the deep feature vector $\mathcal{C}(x)$ and all reciprocal points are equal: $\forall n \in N: S(z_i, \mathcal{P}^k) = \frac{1}{N}$ and the entropy of the distance distribution is maximized.
	\end{lemma}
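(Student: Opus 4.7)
The plan is to recognize that the inner sum in Eq. \eqref{Eqn:generator2} is precisely the Shannon entropy of the softmax probability vector over the $N$ reciprocal points, and then to invoke two elementary facts: (i) the Shannon entropy of a distribution on $N$ outcomes is maximized by the uniform distribution, and (ii) the softmax of a logit vector is uniform exactly when its inputs are all equal. Combining these facts yields the claim.

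First, I would fix an index $i$ and write $S_k \coloneqq S(z_i, \mathcal{P}^k) = \mathrm{softmax}_k\bigl(d_e(\mathcal{C}(G(z_i)), \mathcal{P}^{\cdot})\bigr)$, noting that by construction $S_k \geq 0$ and $\sum_{k=1}^N S_k = 1$, so $(S_1,\dots,S_N)$ is a valid probability distribution on $\{1,\dots,N\}$. Then the bracketed quantity in Eq.\ \eqref{Eqn:generator2} equals $\tfrac{1}{N} H(S)$, where $H(S)=-\sum_k S_k \log S_k$ is the Shannon entropy. Since $1/(nN)$ is a positive constant independent of $G$, maximizing Eq.\ \eqref{Eqn:generator2} over $G$ is equivalent to maximizing $H(S_i)$ for each $i$ separately.

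Second, I would invoke the standard maximum entropy result for discrete distributions: for any probability vector $(S_1,\dots,S_N)$ on $N$ atoms, $H(S)\leq \log N$, with equality if and only if $S_k = 1/N$ for every $k$. This can be justified in one line by Jensen's inequality applied to the concave function $-x\log x$, or by a Lagrange-multiplier argument on the simplex. Hence the maximum of the objective in Eq.\ \eqref{Eqn:generator2} is attained exactly when $S(z_i,\mathcal{P}^k)=1/N$ for all $k$.

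Finally, I would translate this uniformity condition on the softmax outputs back to a condition on the distances. Since softmax is invariant under additive shifts of its input but is strictly monotone in each coordinate otherwise, the vector $\mathrm{softmax}(\ell_1,\dots,\ell_N)$ equals $(1/N,\dots,1/N)$ if and only if $\ell_1=\ell_2=\cdots=\ell_N$. With $\ell_k = d_e(\mathcal{C}(G(z_i)),\mathcal{P}^k)$, this is precisely the statement that all distances from $\mathcal{C}(x)$ to the reciprocal points coincide. There is no substantive obstacle here; the only care needed is to state cleanly that the $1/N$ weight in Eq.\ \eqref{Eqn:generator2} does not affect the argmax, and to note that the attained maximum value of the per-sample entropy is $\log N$, so the overall objective achieves $\tfrac{1}{N}\log N$ per sample in this regime.
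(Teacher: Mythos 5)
Your proposal is correct and follows essentially the same route as the paper, which simply observes that the bracketed quantity in Eq.~\eqref{Eqn:generator2} is (up to the constant $1/N$) the Shannon entropy of the per-class softmax scores and that entropy is maximized at the uniform distribution. In fact your write-up is more complete than the paper's one-sentence remark, since you also supply the equality condition ($H(S)=\log N$ iff $S_k=1/N$) and the translation from a uniform softmax back to equal distances $d_e(\mathcal{C}(x),\mathcal{P}^k)$.
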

	
	\noindent
	For $x = G(z)$, Eq. \eqref{Eqn:generator2} is the same in form as the information entropy over the per-class softmax scores. Thus, based on Shannon entropy \cite{shannon1948mathematical}, it is intuitive that Eq. \eqref{Eqn:generator2} is maximized when all values are equal. 
	
	By combining these two mechanisms for confrontation, the generator is optimized by:
	\begin{equation}
		\begin{split}
			\max_{G} \ \frac{1}{n}\sum^n_{i=1}[\log{D(G(z_i))} + \beta \cdot H(z_i, \mathcal{P})],
			\label{Eqn:generator}
		\end{split}
	\end{equation}
	where $\beta$ is a hyperparameter for controlling the weight of the information entropy loss, and $H(z_i, \mathcal{P}) = -\frac{1}{N}\sum^{N}_{k=1} S(z_i, \mathcal{P}^k) \cdot log(S(z_i, \mathcal{P}^k)) $ is the information entropy function. 
	As in the framework illustrated in Fig.~\ref{fig:GAN},  we aim to generate samples more similar with known samples; this also forces the generator to create samples that balance the distance for all the reciprocal points, so that they are close to the global open space $\mathcal{O}_G$.
	If the generated samples are far from the boundary of the known samples, the loss in Eq. \eqref{Eqn:generator1} should be large. 
	To deceive the discriminator, the generator generates samples similar to the known classes, which also makes the features of the generated samples close to the known classes and far from some reciprocal points. 
	Hence, the loss in Eq. \eqref{Eqn:generator2} should be large. 
	Therefore, one expects that the proposed loss will encourage the generator to produce samples that are on the boundary of the global open space, as shown in Fig.~\ref{fig:osg_ARPL} and Fig.~\ref{fig:osg_examples}.
	
	\begin{figure}[!tb]
		\centering
		\subfigure[Retrieval Examples]{
			\begin{minipage}[t]{0.48\linewidth}
				\centering
				\includegraphics[width=\linewidth]{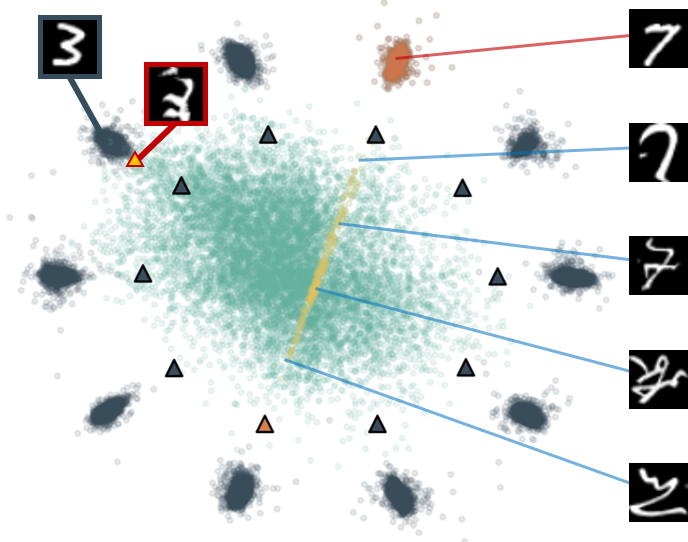}
				\label{fig:retrieval}
			\end{minipage}%
		}
		\subfigure[Generated Samples]{
			\begin{minipage}[t]{0.48\linewidth}
				\centering
				\includegraphics[width=\linewidth]{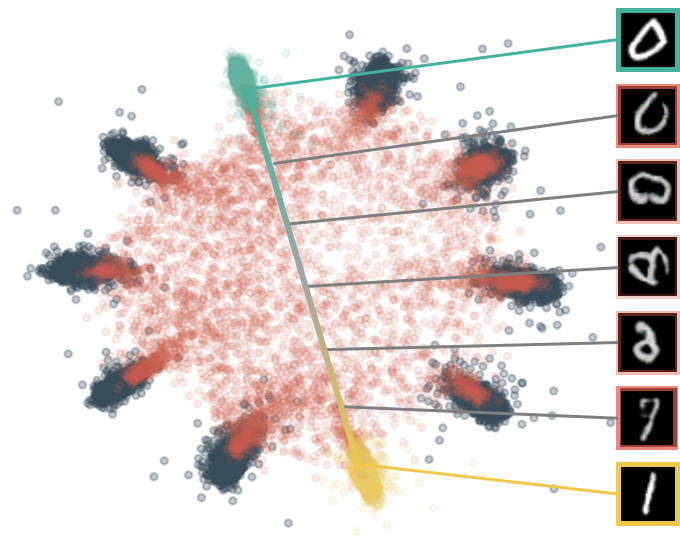}
				\label{fig:osg_examples}
			\end{minipage}%
		}
		\caption{(a) Several retrieval examples from KMNIST. The orange circle represents the known class number $7$, and the orange triangle represents the reciprocal point corresponding to number $7$. The red sample in the upper left corner indicates a failure case, which is very similar to the known class number $3$. (b) Generated confusing images from adversarial training with the ARPL classifier on MNIST. The topmost and bottommost correspond to the known training images.} 
		
	\end{figure}
	
	\subsection{Reliability Enhancement}
	Consider the generated samples as unknown data $\mathcal{D}_U$, and consider the ultimate goal of training a better feature space, where the open space is minimized.
	Therefore, the classifier $C$ is optimized by the generated confusing samples as:
	\begin{equation}
		\min_{C} \ \frac{1}{n}\sum^n_{i=1}[\mathcal{L}(x_i, y_i) - \beta \cdot H(z_i, \mathcal{P})],
		\label{Eqn:classifier}
	\end{equation}
	where $\mathcal{L}$ is the overall loss of ARPL.
	These generated samples are used to estimate the unknown distribution of $\mathcal{O}_G$, to reduce the open space risk by reducing the size of $\mathcal{O}_G$ (as shown in Fig.~\ref{fig:osg_ARPL_OS}).
	
	Note that the known samples and generated samples are processed independently in Eq. \eqref{Eqn:classifier}. 
	In this circumstance, the generated samples could confuse the classifier because of their different distributions with known samples, resulting in inaccurate statistics.
	To disentangle this mixed distribution into two underlying distributions for the known and confusing samples, we propose \emph{Auxiliary Batch Normalization} (ABN) to guarantee that the normalization statistics obtained exclusively for the confusing examples.
	Specifically, Batch Normalization \cite{ioffe2015batch} normalizes the input features by the mean and variance computed within each minibatch, where the input features should come from a single or similar distribution \cite{xie2020adversarial}. 
	As illustrated in Fig.~\ref{fig:GAN}, ABN helps to disentangle mixed distributions by keeping separate BNs for features that belong to different domains.
	Compared with the two-component mixture distribution (known and confusing samples), this auxiliary BN is able to effectively block the negative impact of confusing samples on known class discrimination.
	The ablation studies in Sec.~\ref{EXP:OOD} demonstrate that such disentangled learning with multiple BNs can improve the performance. 
	
	
	
	Finally, the discriminator and the classifier can be used to improve each other through the confused generator. 
	This naturally suggests a joint training scheme where the classifier improves the generator and vice versa.
	In the same way, it is inevitably valid for the discriminator.
	An alternating algorithm is designed to optimize the above objective efficiently, as shown in Alg. \ref{alg:2}. 
	After training each classifier with the confusing samples (step 6 in Alg. \ref{alg:2}), we add \emph{Focus Training} (FT) and use the known class to train the classifier again.
	The purpose of this scheme is to encourage the classifier to focus on the known classification and correct the deviation of giving too much attention to the confusing samples.
	
	Compared with \cite{lee2017training, dai2017good, neal2018open, ge2017generative}, the main differences of the proposed instantiated adversarial enhancement are as follows: First, the proposed method uses the adversarial mechanism between the close space $\mathcal{S}_k$ and the global open space $\mathcal{O}_G$ formed by the reciprocal points. 
	Second, the generated confusing samples and known samples are validated as two different distributions to accurately estimate the statistics of known and unknown classes. In addition, the images generated by our method cover the whole low response of the unknown feature space (as shown in Fig.~\ref{fig:osg}), and contain a certain quantity of confusing images similar to the known classes (as illustrated in Fig.~\ref{fig:osg_examples}).

	\begin{algorithm}[!tb]
		\fontsize{8}{8}
		\renewcommand{\algorithmicrequire}{\textbf{Input:}}
		\renewcommand{\algorithmicensure}{\textbf{Output:}}
		\caption{The instantiated adversarial enhancement algorithm.}
		\label{alg:2}
		\begin{algorithmic}[1]
			\REQUIRE Training data $\{ x_i \}$. Initialized parameters $\theta_D$ of the discriminator $D$, $\theta_G$ of the confusing generator $G$ and $\theta_C$ of the classifier $C$ with $\mathcal{P}$ and $R$ in loss layers, respectively. Hyperparameter $ \lambda, \gamma, \beta $.
			\ENSURE The parameters $\theta_D$, $\theta_G$, $\theta_C$, $\mathcal{P}$ and $R$.
			\REPEAT
			\STATE Sample $ \{ z_{1}, ..., z_N \} $ from prior $P_{pri} (z)$ and known samples $ \{ (x_1, y_1), ..., (x_N, y_N) \}$.
			\STATE Update the discriminator parameters $\theta_D$ by ascending its stochastic gradient: \\
			\begin{equation}
				\nonumber
				\nabla_{\theta_D}\frac{1}{n}\sum^n_{i=1}[\log{D(x_i)}+\log(1-D(G(z_i)))].
			\end{equation}
			\STATE Update the generator parameters $\theta_G$ by ascending its stochastic gradient: \\
			\begin{equation}
				\nonumber
				\nabla_{\theta_G}\frac{1}{N}\sum^N_{i=1}[\log{D(G(z_i))} + \beta \cdot H(z_i \mathcal{P})].
			\end{equation}
			\STATE Update the classifier parameters $\theta_C$ with $\mathcal{P}$ and $R$ by by descending its stochastic gradient: 
			\begin{equation}
				\nonumber
				\nabla_{\theta_C}\frac{1}{n}\sum^n_{i=1}[L(x_i, y_i) - \beta \cdot H(z_i, \mathcal{P})].
			\end{equation}
			\STATE Update the classifier parameters $\theta_C$ with $\mathcal{P}$ and $R$ by minimizing $\frac{1}{n}\sum^n_{i=1}L(x_i, y_i)$.
			\UNTIL{\textit{convergence}}
		\end{algorithmic}
	\end{algorithm}

	\subsection{Unknown Classes Detection}
	
	Based on Eq. \eqref{Eqn:r-point}, the unknown samples naturally have a closer distance to all reciprocal points than the samples of known classes. 
	Therefore, the probability that the test instance $x$ belongs to one of the known classes is proportional to the distance between $x$ and the farthest reciprocal point corresponding to category $k$:
	\begin{equation}
		p(known|x) \propto \max_{k \in \lbrace 1, \dots, N \rbrace}{d(f(x), \mathcal{P}^k)}.
	\end{equation}
	One of the key issues in OSR models is \emph{what's a good score for open set recognition?} (i.e., identifying a class as known or unknown).
	Since how rare or common samples from unknown space are not known in the actual scenario, the approaches to OSR that require an arbitrary threshold or sensitivity for comparison are unreasonable \cite{neal2018open}.
	Thus, the difference between known and unknown probability belonging to any known classes is used to measure the learned models' ability to detect unknown samples, which provides a calibration-free measure of the detection performance.

	\begin{table*}[!tb]
		\caption{The AUROC results of on detecting known and unknown samples. Results are averaged among five randomized trials.}
		\centering
		\label{tab:osdi}
		\begin{tabular}{cllllll}
			\specialrule{.16em}{0pt} {.65ex}
			Method & MNIST & SVHN & CIFAR10 & CIFAR+10 & CIFAR+50 & TinyImageNet \\
			\midrule
			Softmax & 97.8 $\pm$ 0.2 & 88.6 $\pm$ 0.6 & 67.7 $\pm$ 3.2 & 81.6 $\pm$ - & 80.5 $\pm$ - & 57.7 $\pm$ - \\
			Openmax \cite{bendale2016towards}  & 98.1 $\pm$ 0.2 & 89.4 $\pm$ 0.8 & 69.5 $\pm$ 3.2 & 81.7 $\pm$ - & 79.6 $\pm$ - & 57.6 $\pm$ - \\
			G-OpenMax \cite{ge2017generative}  & 98.4 $\pm$ 0.1 & 89.6 $\pm$ 0.6 & 67.5 $\pm$ 3.5 & 82.7 $\pm$ - & 81.9 $\pm$ - & 58.0 $\pm$ - \\
			OSRCI \cite{neal2018open}  & 98.8 $\pm$ 0.1 & 91.0 $\pm$ 0.6 & 69.9 $\pm$ 2.9 & 83.8 $\pm$ - & 82.7 $\pm$ - & 58.6 $\pm$ - \\
			CROSR \cite{yoshihashi2019classification} & 99.1 $\pm$ - & 89.9 $\pm$ - & 88.3 $\pm$ - & 91.2 $\pm$ - & 90.5 $\pm$ - & 58.9 $\pm$ - \\
			C2AE \cite{oza2019c2ae}  & 98.9 $\pm$ 0.2 & 92.2 $\pm$ 0.9 & 89.5 $\pm$ 0.8 & 95.5 $\pm$ 0.6 & 93.7 $\pm$ 0.4 & 74.8 $\pm$ 0.5 \\
			\midrule
			RPL \cite{chen_2020_ECCV}  & 98.9 $\pm$ 0.1 & 93.4 $\pm$ 0.5 & 82.7 $\pm$ 1.4 & 84.2 $\pm$ 1.0 & 83.2 $\pm$ 0.7 & 68.8 $\pm$ 1.4 \\
			ARPL & 99.6 $\pm$ 0.1 & 96.3 $\pm$ 0.3 & 90.1 $\pm$ 0.5 & 96.5 $\pm$ 0.6 & 94.3 $\pm$ 0.4 & 76.2 $\pm$ 0.5 \\
			ARPL+CS & \textbf{99.7} $\pm$ 0.1 & \textbf{96.7} $\pm$ 0.2 & \textbf{91.0} $\pm$ 0.7 & \textbf{97.1} $\pm$ 0.3 & \textbf{95.1} $\pm$ 0.2 & \textbf{78.2} $\pm$ 1.3 \\
			\specialrule{.16em}{.4ex}{0pt}
		\end{tabular}
	\end{table*}
	
	\begin{table*}[!tb]
		\caption{The open set classification rate (OSCR) curve results of open set recognition. Results are averaged among five randomized trials.}
		\centering
		\label{tab:osr}
		\begin{tabular}{cllllll}
			\specialrule{.16em}{0pt} {.65ex}
			Method & MNIST & SVHN & CIFAR10 & CIFAR+10 & CIFAR+50 & TinyImageNet \\
			\midrule
			Softmax & 99.2 $\pm$ 0.1 & 92.8 $\pm$ 0.4 & 83.8 $\pm$ 1.5 & 90.9 $\pm$ 1.3 & 88.5 $\pm$ 0.7 & 60.8 $\pm$ 5.1 \\
			GCPL \cite{yang2018robust} & 99.1 $\pm$ 0.2 & 93.4 $\pm$ 0.6 & 84.3 $\pm$ 1.7 & 91.0 $\pm$ 1.7 & 88.3 $\pm$ 1.1  & 59.3 $\pm$ 5.3 \\
			RPL \cite{chen_2020_ECCV}  & 99.4 $\pm$ 0.1 & 93.6 $\pm$ 0.5 & 85.2 $\pm$ 1.4 & 91.8 $\pm$ 1.2 & 89.6 $\pm$ 0.9 & 53.2 $\pm$ 4.6 \\
			ARPL & 99.4 $\pm$ 0.1 & 94.0 $\pm$ 0.6 & 86.6 $\pm$ 1.4 & 93.5 $\pm$ 0.8 & 91.6 $\pm$ 0.4 & 62.3 $\pm$ 3.3 \\
			ARPL+CS & \textbf{99.5} $\pm$ 0.1 & \textbf{94.3} $\pm$ 0.3 & \textbf{87.9} $\pm$ 1.5 & \textbf{94.7} $\pm$ 0.7 & \textbf{92.9} $\pm$ 0.3 & \textbf{65.9} $\pm$ 3.8 \\
			\specialrule{.16em}{.4ex}{0pt}
		\end{tabular}
	\end{table*}
	
	\section{Experiments}
	
	\subsection{Implementation Details}
	\label{implementation}
	$\gamma$ is set as 1.0, and $ \lambda $ and $\beta$ are set to 0.1, in all training phases.
	They all are determined by cross validation.
	The reciprocal points are initialized by a random normal distribution and each margin is initialized to one.
	For open set recognition, a global average pooling is added after the final convolution layer of the encoder.
	In experiments for out-of-distribution samples, the output after global average pooling (GAP) of the ResNet is utilized as the feature.
	The dimension of the reciprocal point is consistent with the output of the GAP.
	Each known class is assigned one reciprocal point for training.
	In addition to MNIST, random center-cropping and random horizontal flips are used as data augmentation.
	The images in TinyImageNet are resized to 64x64 in the experiments. 
	During testing, only the BN for known classes is used to generate the deep feature for known and unknown classes.
	
	\subsection{Experiments for Open Set Recognition}
	\label{EXP:OSR}
	
	\textbf{Datasets.} Similar to \cite{oza2019c2ae}, a simple summary of these protocols for each dataset is provided: 
	\begin{itemize}
		\item \textbf{MNIST}, \textbf{SVHN}, \textbf{CIFAR10}. For MNIST \cite{lecun1998gradient}, SVHN \cite{netzer2011reading} and CIFAR10 \cite{krizhevsky2009learning}, 6 known classes and 4 unknown classes are randomly sampled. 
		\item \textbf{CIFAR+10}, \textbf{CIFAR+50}. For the CIFAR+$N$ experiments, 4 classes are sampled from CIFAR10 for training. $N$ nonoverlapping classes are used as unknown classes, which are sampled from the CIFAR100 dataset \cite{krizhevsky2009learning}. 
		\item \textbf{TinyImageNet}. For experiments with TinyImageNet \cite{russakovsky2015imagenet}, 20 known classes and 180 unknown classes are randomly sampled for evaluation.
	\end{itemize}

	\noindent
	\textbf{Evaluation Metrics.} 
	Since how rare or common the samples from unknown space are is not known in the actual scenario, the approaches to OSR that require an arbitrary threshold or sensitivity for comparison are unreasonable \cite{neal2018open}. 
	A threshold-independent metric, the Area Under the Receiver Operating Characteristic (AUROC) curve \cite{neal2018open}, is taken as one of the evaluation metrics.
	The AUROC curve is threshold-independent metric \cite{davis2006relationship} that plots the true positive rate against the false positive rate by varying a threshold \cite{lee2017training}. 
	It can be interpreted as the probability that a positive example is assigned a higher detection score than a negative example \cite{fawcett2006introduction}. 
	
	However, the AUROC evaluates only the distinction between known and unknown classes, and does not consider the accuracy of known classes in open set recognition, which has been however hidden by this gold-standard "fair" metric \cite{dhamija2018reducing,fang2021learning}.
	To adapt it to the case of open set recognition, we introduce \emph{Open Set Classification Rate} (OSCR) \cite{dhamija2018reducing} as a new evaluation metric. 
	Let $\delta$ be a score threshold. 
	The \emph{Correct Classification Rate} (CCR) is the fraction of samples where the correct class $k$ has maximum probability and has a probability greater than $\delta$: 
	\begin{equation} 
		CCR(\delta)=\frac{|\{x \in \mathcal{D}_\mathcal{T}^k \land arg max_k P(k|x) = \hat{k} \land P(\hat{k}|x) \ge \delta \} }{|\mathcal{D}_\mathcal{T}^k|}.
	\end{equation}
	The \emph{False Positive Rate} (FPR) is the fraction of samples from unknown data $\mathcal{D}_U$ that are classified as \emph{any} known class $k$ with a probability greater than $\delta$:
	\begin{equation}
		FPR(\delta) = \frac{|\{x | x \in \mathcal{D}_U \land max_k{P(k|x) \ge \delta}\}|}{|\mathcal{D}_U|}.
	\end{equation}
	A larger value of the OSCR indicates better detection performance. 
	Following the protocol in \cite{neal2018open}, the AUROC and the OSCR are averaged over five randomized trials.
	
	\noindent
	\textbf{Network Architecture.} 
	The classifier for this experiment is the same as the neural network used in \cite{neal2018open}. 
	Apart from the Adam optimizer \cite{kingma2014adam} used in TinyImageNet, all classifiers are trained with the momentum stochastic gradient descent (Momentum SGD) optimizer \cite{qian1999momentum}.
	The learning rate of the classifier starts from 0.1 and decreases by a factor of 0.1 every 30 epochs in the training progress.
	The confused generator and the discriminator are the same as thos in \cite{lee2017training}, and are trained by the Adam optimizer \cite{kingma2014adam} with a learning rate of 0.0002. 
	More details are given in Section \ref{implementation}.
	
	\noindent
	\textbf{Result Comparison.}
	As shown in Table~\ref{tab:osdi}, ARPL using only known training samples significantly outperforms most other approaches (including traditional discriminative method-based neural networks and some complicated generative methods \cite{neal2018open, oza2019c2ae, yoshihashi2019classification} for OSR) significantly.
	These generative methods \cite{neal2018open, oza2019c2ae, yoshihashi2019classification} consider using decoder to optimize the deep feature space, but they do not pay attention to the characteristics of the unknown distribution in deep feature space.
	Instead, ARPL pushes the known classes away from the unknown classes through reciprocal points to form a better discriminative feature space.
	Furthermore, ARPL with Confusing Samples (ARPL+CS) performs significantly better than other recent state-of-the-art generative methods \cite{neal2018open, oza2019c2ae, yoshihashi2019classification, sun2020conditional} and ARPL, especially on SVHN, CIFAR, and TinyImageNet.
	This further demonstrates the superiority of the proposed method, and these confusing samples can effectively improve the reliability of the neural network with ARPL.
	
	
	
	
	Moreover, we design a new OSR experiment for a more reasonable comparison. 
	First, we abandon the baseline with hinge loss in \cite{neal2018open}, which can lead to some optimization difficulties. 
	The more robust cross-entropy loss is used as a new baseline in this experiment.
	Second, we introduce a new evaluation metric, OSCR\cite{dhamija2018reducing}, to comprehensively evaluate the classification performance for known and unknown class detection under different thresholds.
	Finally, under the same five known and unknown splits, we report the performance of the average for five trials.
	
	Compared with the experiment based on the AUROC in Table~\ref{tab:osdi}, most tasks in this new OSR experiment become more difficult because these methods should balance the unknown detection with classification for known classes.
	We compared four discriminative methods based on a neural network as shown in Table~\ref{tab:osr}. 
	ARPL shows excellent performance compared to cross-entropy loss, GCPL \cite{yang2018robust}, and RPL \cite{chen_2020_ECCV}.
	Moreover, with the assistance of confusing samples, the OSCR of ARPL was greatly improved.
	In particular, on TinyImageNet, the performance was improved 3.6\% compared with that of ARPL.
	These results show that ARPL and ARPL+CS are able to effectively improve the detection ability of unknown samples while ensuring the accuracy of known class classification.

	\begin{table*}[!tb]
		\caption{Distinguishing in- and out-of-distribution test set data for image classification under various validation setups. All values are percentages and the best results are indicated in bold.}
		\centering
		\small
		\label{tab:ood}
		\begin{tabular}{ccccccccccc}
			\specialrule{.16em}{0pt} {.65ex}
			\multirow{2}*{Method} & \multicolumn{5}{c}{In: \textbf{CIFAR10} / Out: \textbf{CIFAR100}} & \multicolumn{5}{c}{In: \textbf{CIFAR10} / Out: \textbf{SVHN}} \\
			\cmidrule(r){2-6} \cmidrule(r){7-11}
			& TNR & AUROC & DTACC & AUIN & AUOUT & TNR & AUROC & DTACC & AUIN & AUOUT \\
			\specialrule{.1em}{0pt} {.65ex}
			Cross Entropy & 31.9 & 86.3 & 79.8 & 88.4 & 82.5 & 32.1 & 90.6 & 86.4 & 88.3 & 93.6 \\
			GCPL\cite{yang2018robust} & 35.7 & 86.4 & 80.2 & 86.6 & 84.1 & 41.4 & 91.3 & 86.1 & 86.6 & 94.8 \\
			RPL\cite{chen_2020_ECCV} & 32.6 & 87.1 & 80.6 & 88.8 & 83.8 & 41.9 & 92.0 & 87.1 & 89.6 & 95.1 \\
			ARPL & \textbf{47.0} & \textbf{89.7} & \textbf{82.6} & \textbf{90.5} & \textbf{87.8} & \textbf{53.8} & \textbf{93.2} & \textbf{87.2} & \textbf{90.3} & \textbf{95.8} \\
			\midrule
			JCL \cite{lee2017training} & 35.8 & 87.8 & 81.2 & 89.1 & 88.6 & 34.7 & 92.2 & 88.5 & 90.6 & 92.9 \\
			ARPL+CS (w/o ABN) & 46.3 & 88.8 & 81.7 & 89.1 & 87.1 & 44.7 & 90.9 & 84.1 & 86.2 & 94.6 \\
			ARPL+CS (w/o FT) & 46.7 & 89.7 & 82.4 & 90.7 & 87.8 & 71.0 & 95.6 & 91.1 & 94.0 & 96.8 \\
			ARPL+CS & \textbf{48.5} & \textbf{90.3} & \textbf{83.4} & \textbf{91.1} & \textbf{88.4} & \textbf{79.1} & \textbf{96.6} & \textbf{91.6} & \textbf{94.8} & \textbf{98.0} \\
			\specialrule{.16em}{.4ex}{0pt}
		\end{tabular}
	\end{table*}
	
	\subsection{Experiments for Out-of-Distribution Detection}
	\label{EXP:OOD}
	
	\noindent
	\textbf{Datasets.}
	we adopt three image datasets that represent the most challenging pairs of common OOD detection benchmarks \cite{hendrycks17baseline}, CIFAR10, CIFAR100 \cite{krizhevsky2009learning}, SVHN \cite{netzer2011reading} for evaluation. 
	CIFAR100 and SVHN are the near OOD dataset and far OOD dataset for CIFAR10, respectively. 
	Note that the CIFAR10 and CIFAR100 classes are mutually exclusive. 
	
	\noindent
	\textbf{Evaluation Metrics.}
	Referring to the evaluation index in \cite{hendrycks17baseline,hendrycks2018deep,lee2017training,liang2017enhancing}, the AUROC, the true negative rate (TNR) at $95\%$ true positive rate (TPR), the area under the precision-recall curve (AUPR), and the detection accuracyare are adopted for evaluation:
	\begin{itemize}
		\item \textbf{True negative rate (TNR) at $95\%$ true positive rate (TPR).} Let TP, TN, FP, and FN denote true positives, true negatives, false positives and false negatives, respectively. We measure $TNR=TN/(TP+TN)$, when $TPR=TP/(FP+FN)$ is  $95\%$ 
		\item \textbf{Area under the precision-recall curve (AUPR).} The PR curve is graph plotting precision $=TP/(TP+FP)$ against recall $=TP/(TP+FN)$ by varying a threshold. The \textbf{AUIN} (or \textbf{AUOUT}) is the AUPR where the in- (or out-of-) distribution samples are specified as positive.
		\item \textbf{Detection accuracy (DTACC).} This metric corresponds to the maximum classification probability over all possible thresholds $\delta$. We assume that both positive and negative examples have equal probability of appearing in the test set, i.e., $P(x \in P_{in})=P(x \in P_{out})=0.5$
	\end{itemize}
	
	\noindent
	\textbf{Network Architecture.}
	We demonstrate the effectiveness of the proposed method using ResNet with 34 layers \cite{he2016deep} on various vision datasets.
	All classifiers are trained for 100 epochs with batch size 128 with the Adam optimizer \cite{kingma2014adam}.
	The learning rate of the classifier starts at 0.1 and decreases by a factor of 0.1 every 30 epochs in the training process.
	The confused generator and the discriminator are the same as those in \cite{lee2017training}, and are trained by the Adam optimizer \cite{kingma2014adam} with the learning rate of 0.0002. 
	More details are given in Section \ref{implementation}.
	
	\noindent
	\textbf{Result Comparison.}
	As shown in Table~\ref{tab:ood}, the ARPL outperforms training by three methods for in-distribution data only; the methods are cross-entropy loss (Baseline), GCPL \cite{yang2018robust} and RPL \cite{chen_2020_ECCV}.
	For CIFAR100 containing both test samples that are near as well as far OOD, ARPL is more than 3\% better than the RPL in terms of AUROC.
	This confirms that ARPL can differentiate OOD classes that are near or far from in-distribution data.
	Additionaly, our methods are compared with a generative OOD model, named Joint Confidence Loss (JCL) \cite{lee2017training}, which adopts a similar mechanism with our instantiated adversarial enhancement.
	Due to considering the difference between the unknown and known samples in deep feature space, the performance of ARPL without the auxiliary training of confusing samples even is better than that of JCL.
	
	Here we analyze the role of ABN and FT for instantiated adversarial enhancement.
	The performance of ARPL+CS w/o ABN is much lower than that of ARPL. 
	This is consistent with our assumption that confusing samples and known images have different underlying distributions. 
	Although the generator aims to generate images that are consistent with the distribution of known classes, the distributions of the generated images and known classes gradually separate after adding our adversarial mechanism. 
	One BN for mixed distributions would influence the performance to detect OOD samples. 
	After adding ABN (ARPL+CS w/o FT), the performance is gradually improved, especially for far OOD detection.  
	
	However, the detection of near OOD is not improved compared with that of ARPL. 
	It could be that confusing samples affect the discriminative feature of the known class in the training stage. 
	Based on the prior initialization of confusing samples, we use FT to encourage the classifier to pay more attention to the classification of known classes. 
	The performance of both near and far OOD was further improved as shown in Table~\ref{tab:ood}.
	
	\begin{figure*}[!tb]
		\centering
		\subfigure[RPL($\lambda = 0$)]{
			\begin{minipage}[t]{0.24\linewidth}
				\centering
				\includegraphics[width=\linewidth]{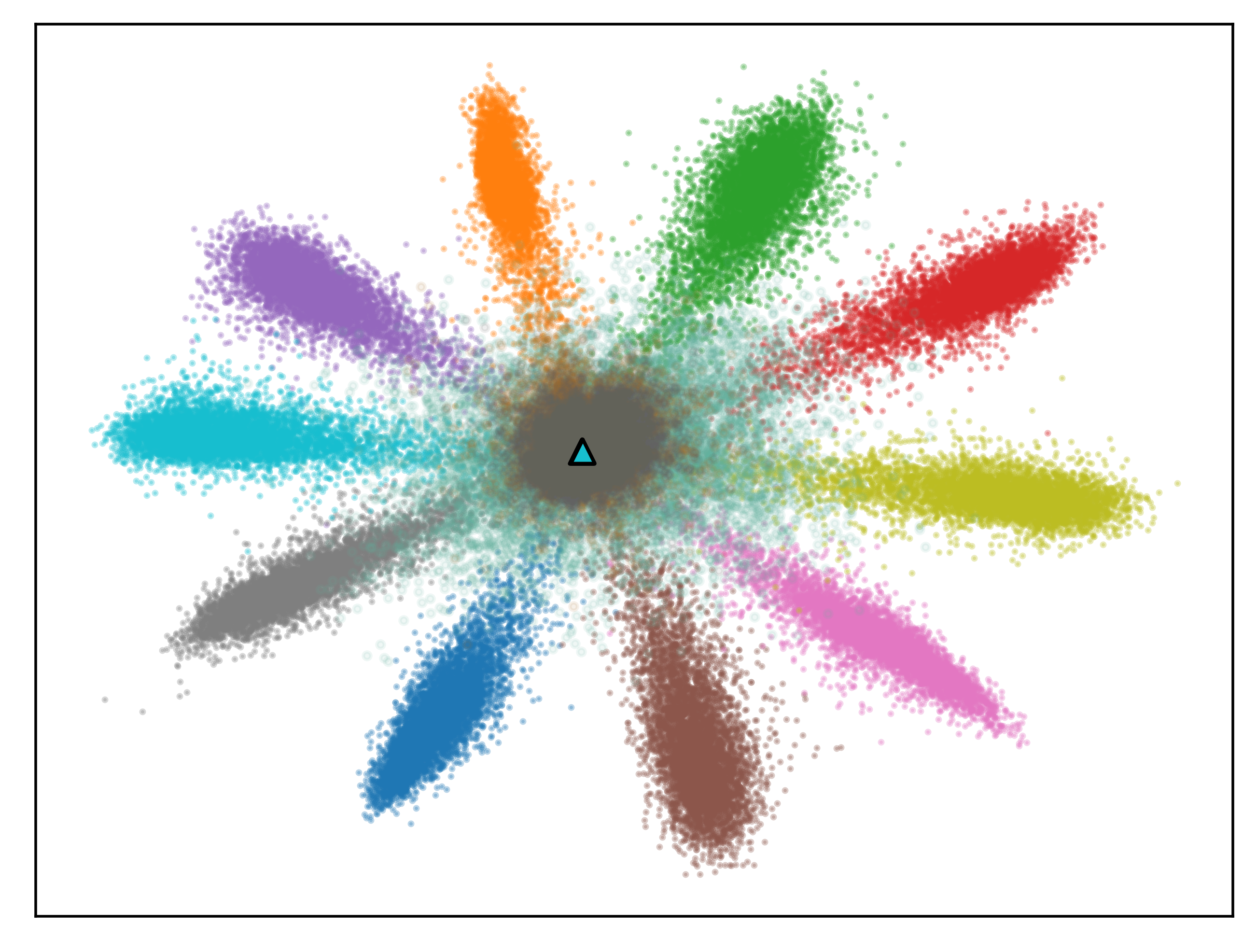}
				\includegraphics[width=\linewidth]{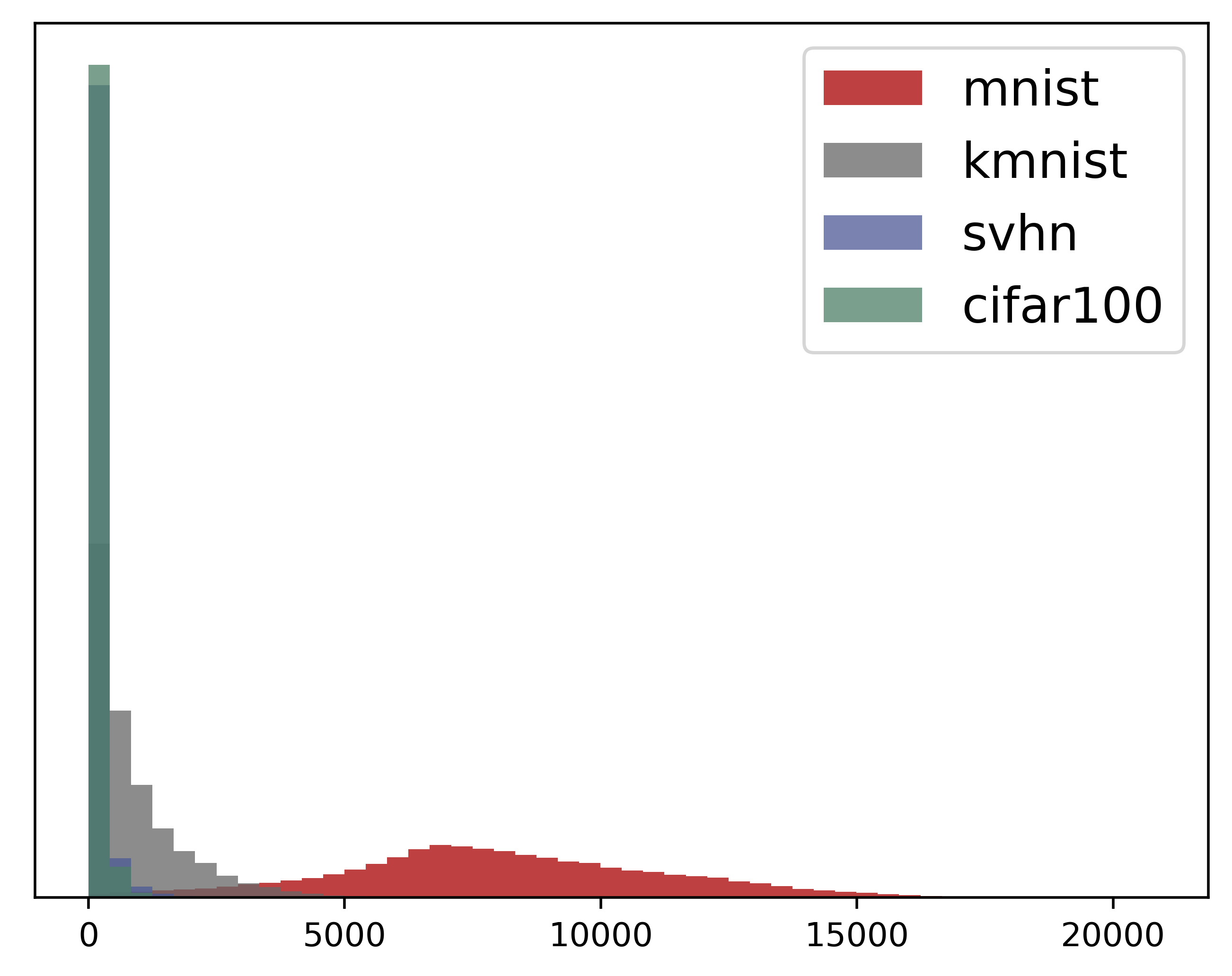}
				\label{fig:RPL}
			\end{minipage}%
		}
		\subfigure[ARPL($\lambda = 0$)]{
			\begin{minipage}[t]{0.24\linewidth}
				\centering
				\includegraphics[width=\linewidth]{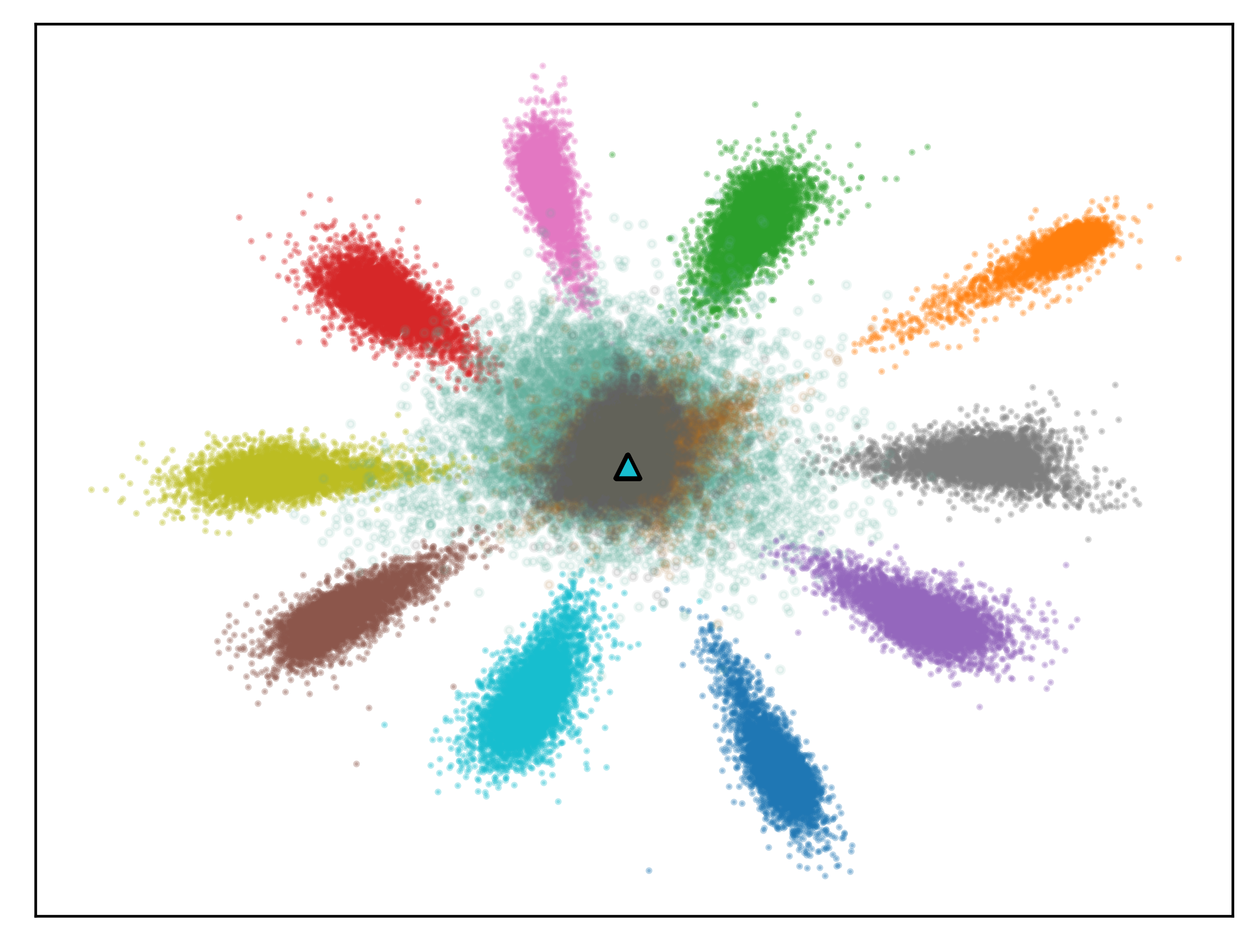}
				\includegraphics[width=\linewidth]{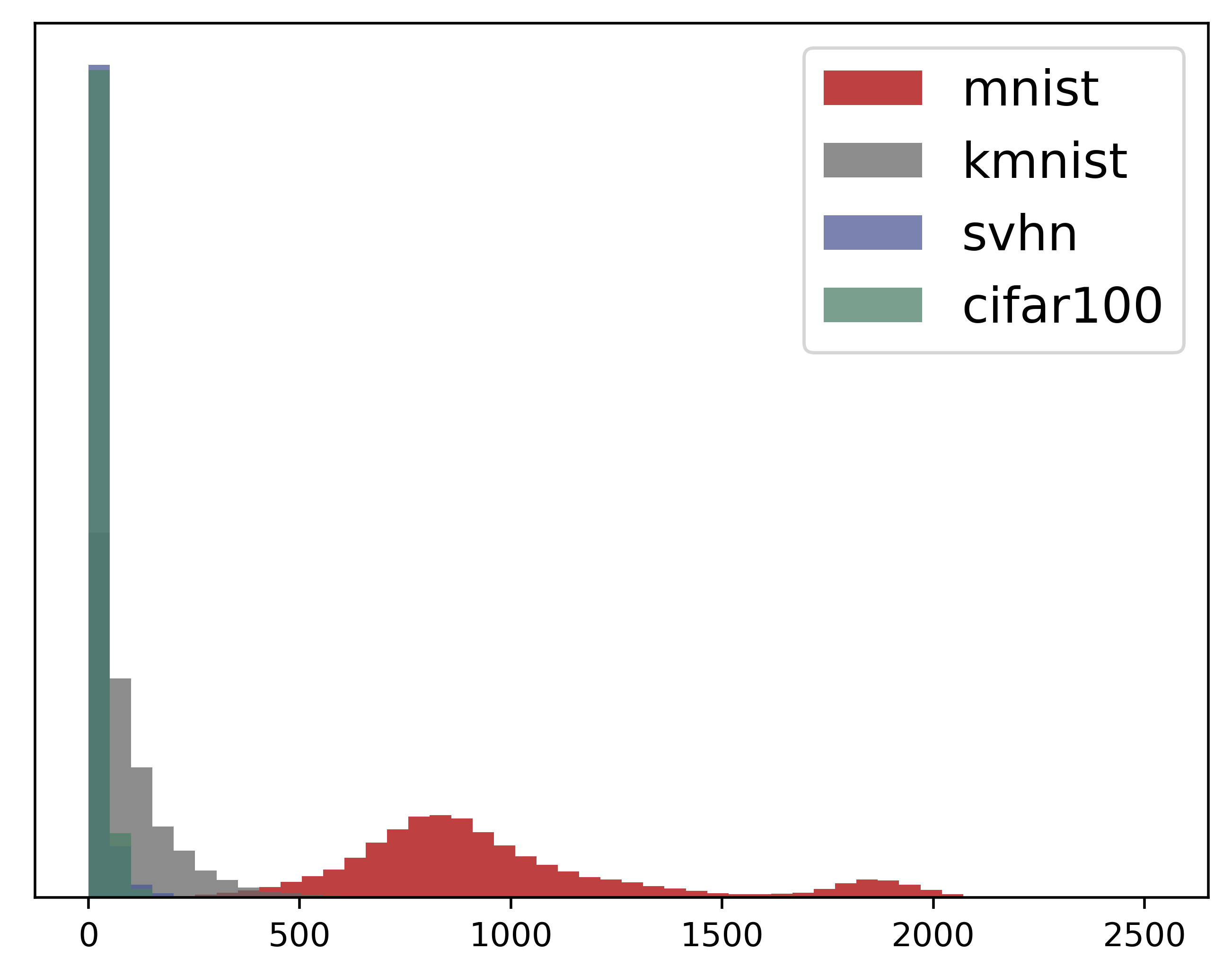}
				\label{fig:ARPL_0}
			\end{minipage}%
		}%
		\subfigure[ARPL($\lambda = 0.1$)]{
			\begin{minipage}[t]{0.24\linewidth}
				\centering
				\includegraphics[width=\linewidth]{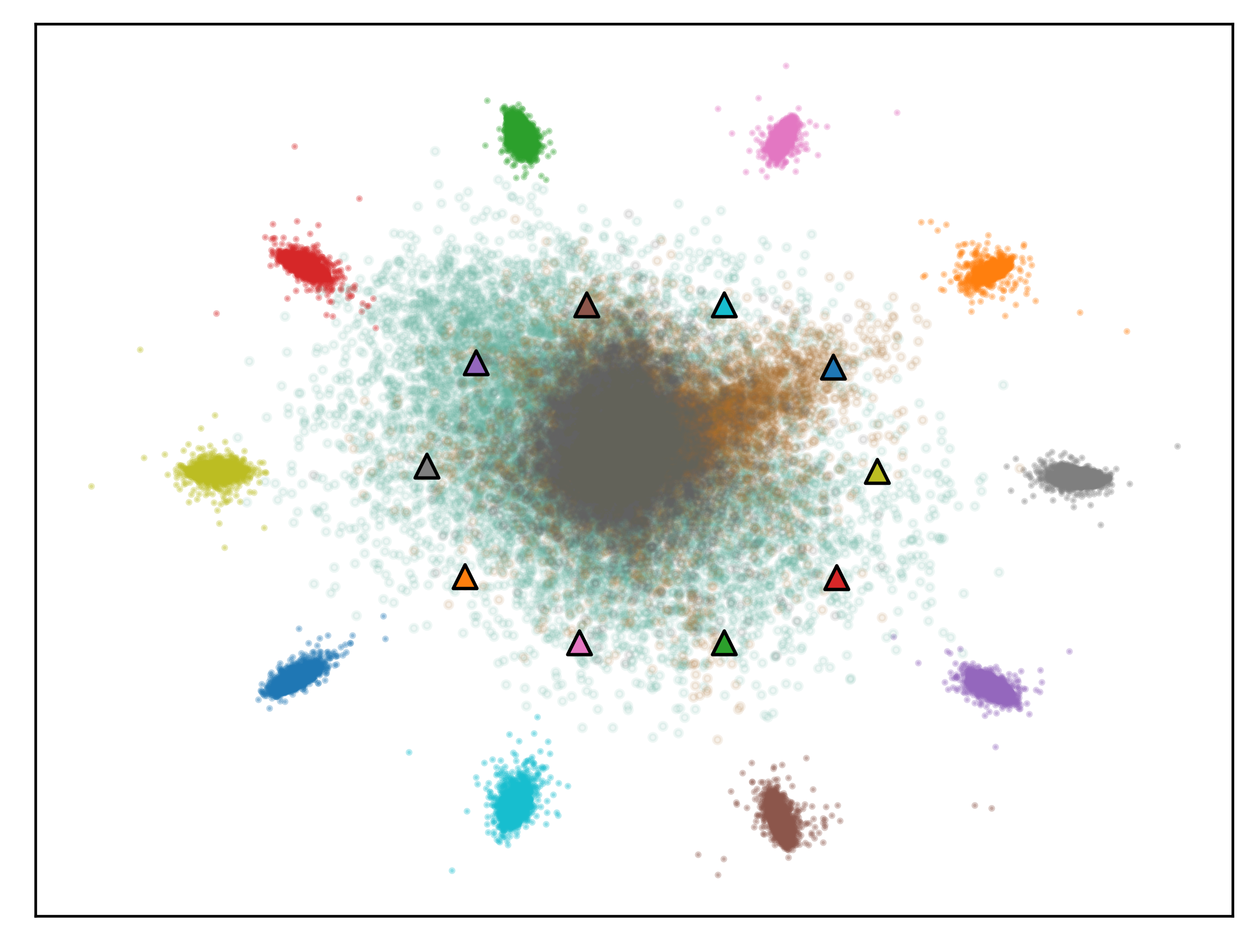}
				\includegraphics[width=\linewidth]{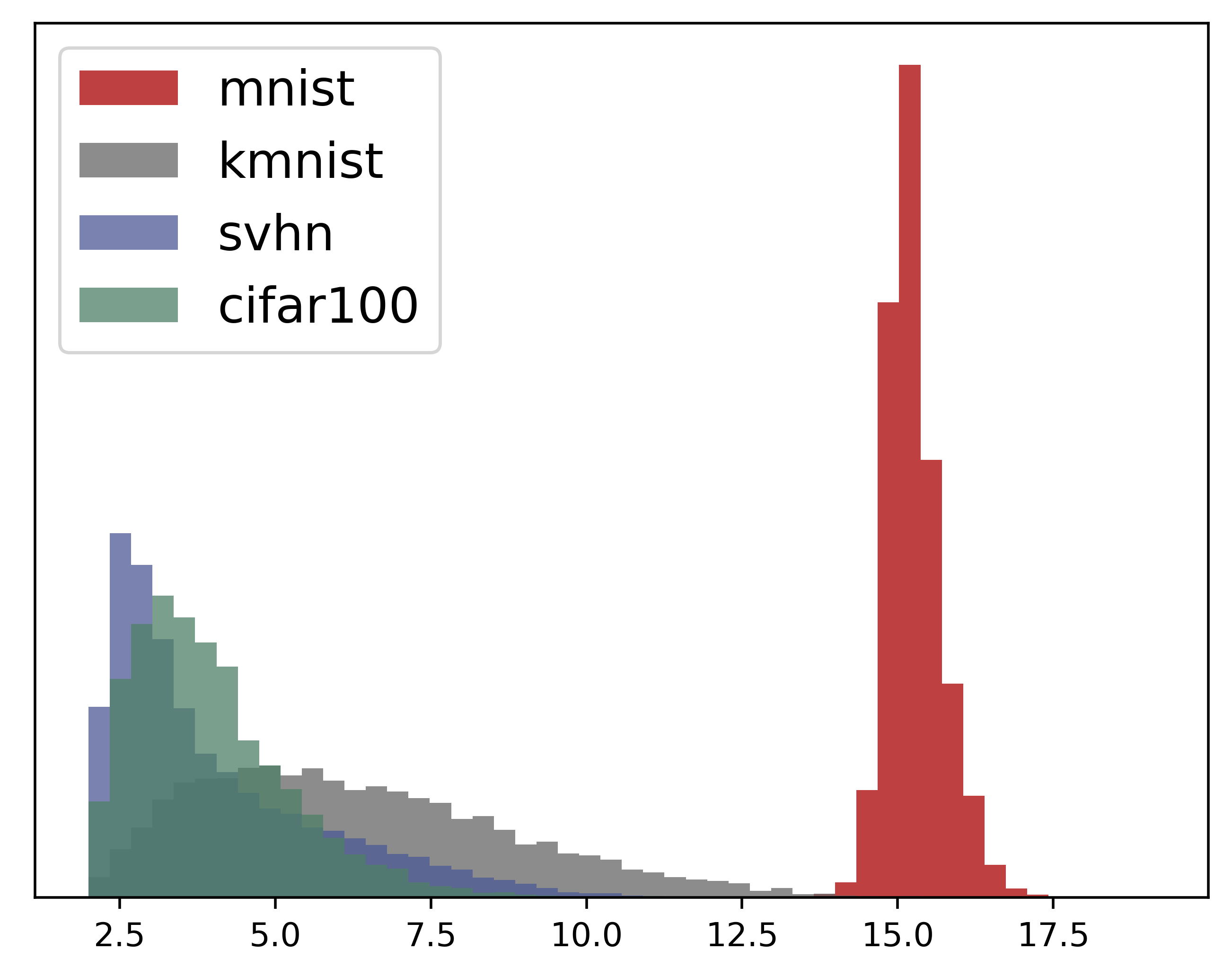}
				\label{fig:ARPL_01}
			\end{minipage}%
		}%
		\subfigure[ARPL+CS]{
			\begin{minipage}[t]{0.24\linewidth}
				\centering
				\includegraphics[width=\linewidth]{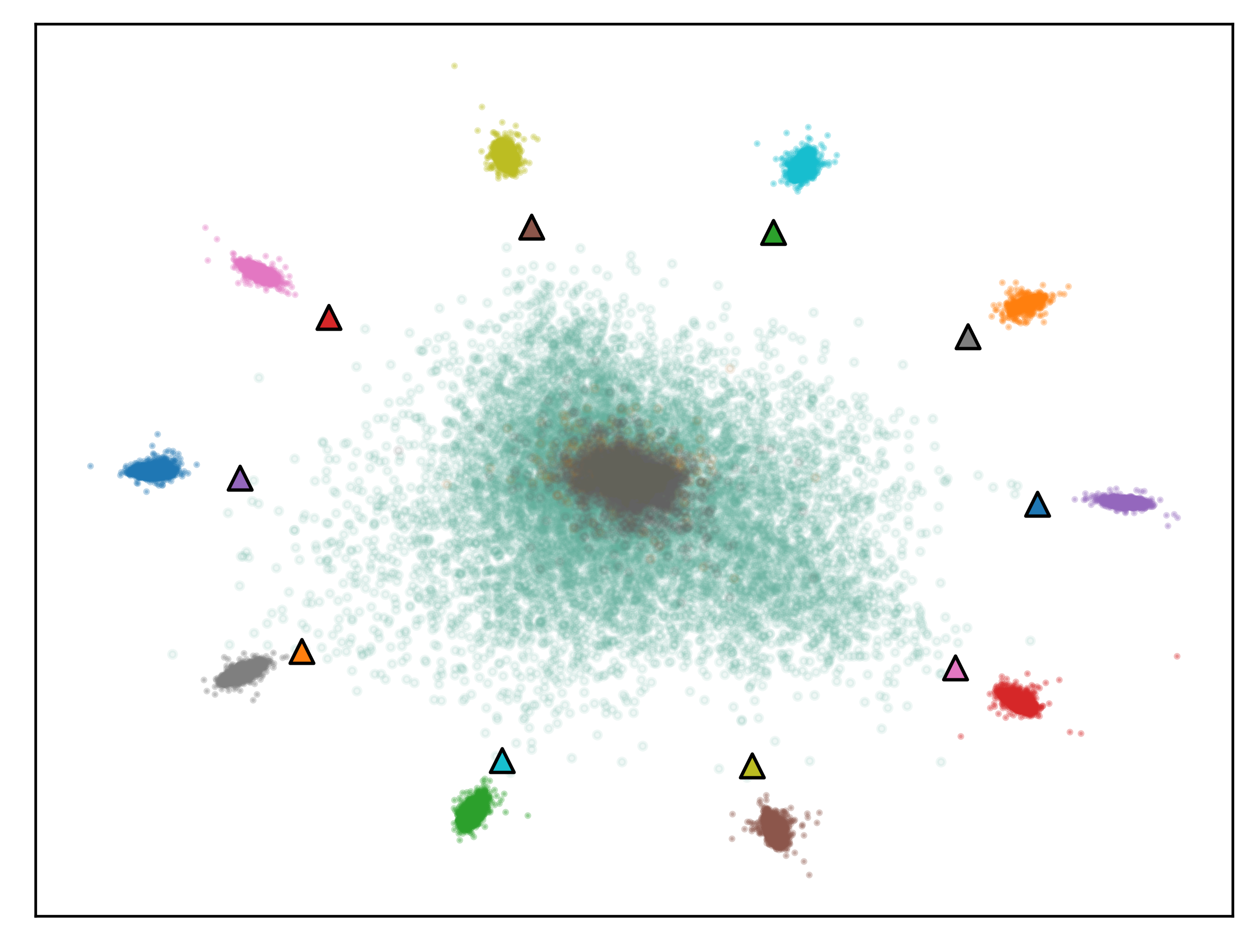}
				\includegraphics[width=\linewidth]{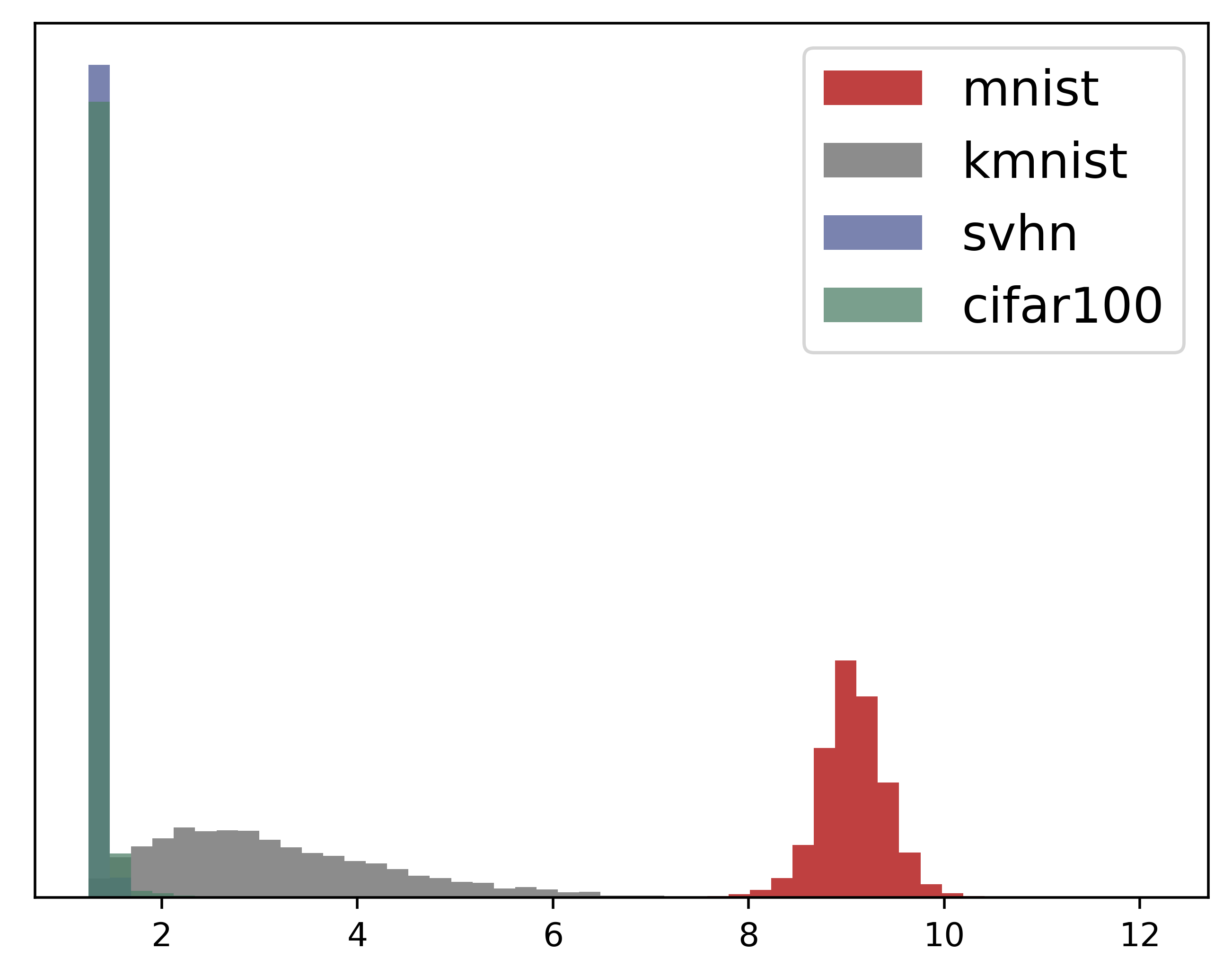}
				\label{fig:ARPL_OSS}
			\end{minipage}%
		}
		\caption{(1) The first row is visualization in the learned feature space of MNIST as known and KMNIST, SVHN, CIFAR100 as unknown. Color shapes in the middle are data of unknown, and circles in color are data of known samples. Different colors represent different classes. Colored triangles represent the reciprocal points learned of different known categories. (2) The second row is the maximum distance distribution between features and reciprocal points.}
		\label{fig:embedding}
	\end{figure*}
	
	\subsection{Ablation Study}
	\label{EXP:AS}

	\subsubsection{ARPL vs. Softmax.}
	The classification loss term (the first part of Eq. \eqref{Eqn: final-loss}) defines the classification principle of reciprocal points.
	Similar to softmax, the learned representation is still linearly separable only with this classification loss.
	As shown in Fig.~\ref{fig:intro_softmax} and \ref{fig:ARPL_0}, reciprocal points without $\mathcal{L}_o$ are learned to the origin, and there is a significant overlap between the known and unknown classes in the low response part of entire feature space. 
	Additionally, observing the OSCR curve in Fig.~\ref{fig:auc_oscr}, the CCR is lower when the FPR is lower for softmax.
	This is the reason why the neural network learned with softmax detects unknown samples as known classes with high confidence.
	In contrast, ARPL with $\mathcal{L}_o$ to constrain the open space achieves a better distribution (as shown in Fig.~\ref{fig:ARPL_01}), where the whole feature space is contained in a limited range (1-13.5 in the abscissa for unknowns and 14-17 in the abscissa for knowns in Fig.~\ref{fig:ARPL_01}) to prevent high confidence for unknown classes.
	APRL with $\mathcal{L}_o$ can guarantee high accuracy even at the low FPR in Fig.~\ref{fig:auc_oscr}, because of effective restrictions on the open space by $\mathcal{L}_o$ and pushing the known classes far away from the global open space.
	
	
	\subsubsection{ARPL vs. GCPL.} 
	As shown in Fig.~\ref{fig:intro_GCPL}, GCPL used the prototypes to reduce the intraclass variance.
	However, without considering the unknown, GCPL extends unknown classes to the whole feature space, resulting in a significant overlap with known classes.
	In the initial stage of neural network training, the prototypes of GCPL are easily distributed in the unknown feature space.
	This also leads to some known categories are distributed in the lower response part of the feature space, which increases the open space risk.
	As shown in Fig.~\ref{fig:auc_oscr}, GCPL achieves worse AUROC and OSCR performance than softmax.
	In contrast, ARPL is not affected by the initialization because each known class is far away from its corresponding unknown representation, a reciprocal point.
	Under the interaction of the classification loss and $\mathcal{L}_o$, different known categories spread to the periphery of the space, while unknown categories are restricted to the interior.
	A clear gap is maintained between the two types of samples (known vs. unknown), as shown in Fig.~\ref{fig:ARPL_01}.
	ARPL improves the robustness of neural networks by preventing the misjudgment of the unknown class through the bounded restriction, thereby enhancing and stabilizing the classification of known categories. 
	
	
	\begin{figure}[!tb]
		\centering
		\includegraphics[width=0.65\linewidth]{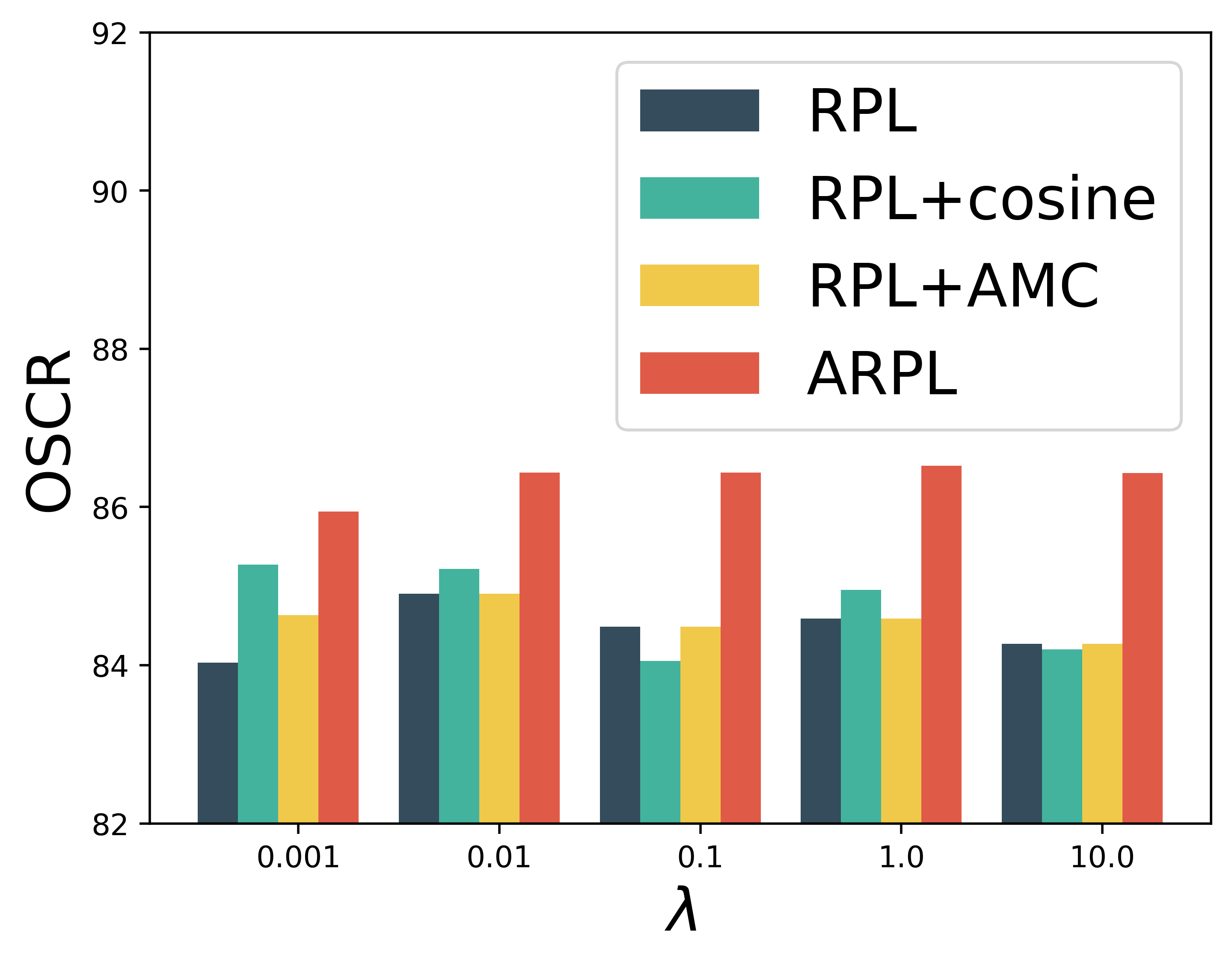}
		\caption{Ablation experiments on $\lambda$ with CIFAR10 as known data and CIFAR100 as unknown data.}
		\label{fig:lambda}
	\end{figure}
	
	\begin{figure*}[!tb]
		\centering
		\subfigure[known:MNIST, unknown: KMNIST]{
			\begin{minipage}[t]{0.31\linewidth}
				\centering
				\includegraphics[width=\linewidth]{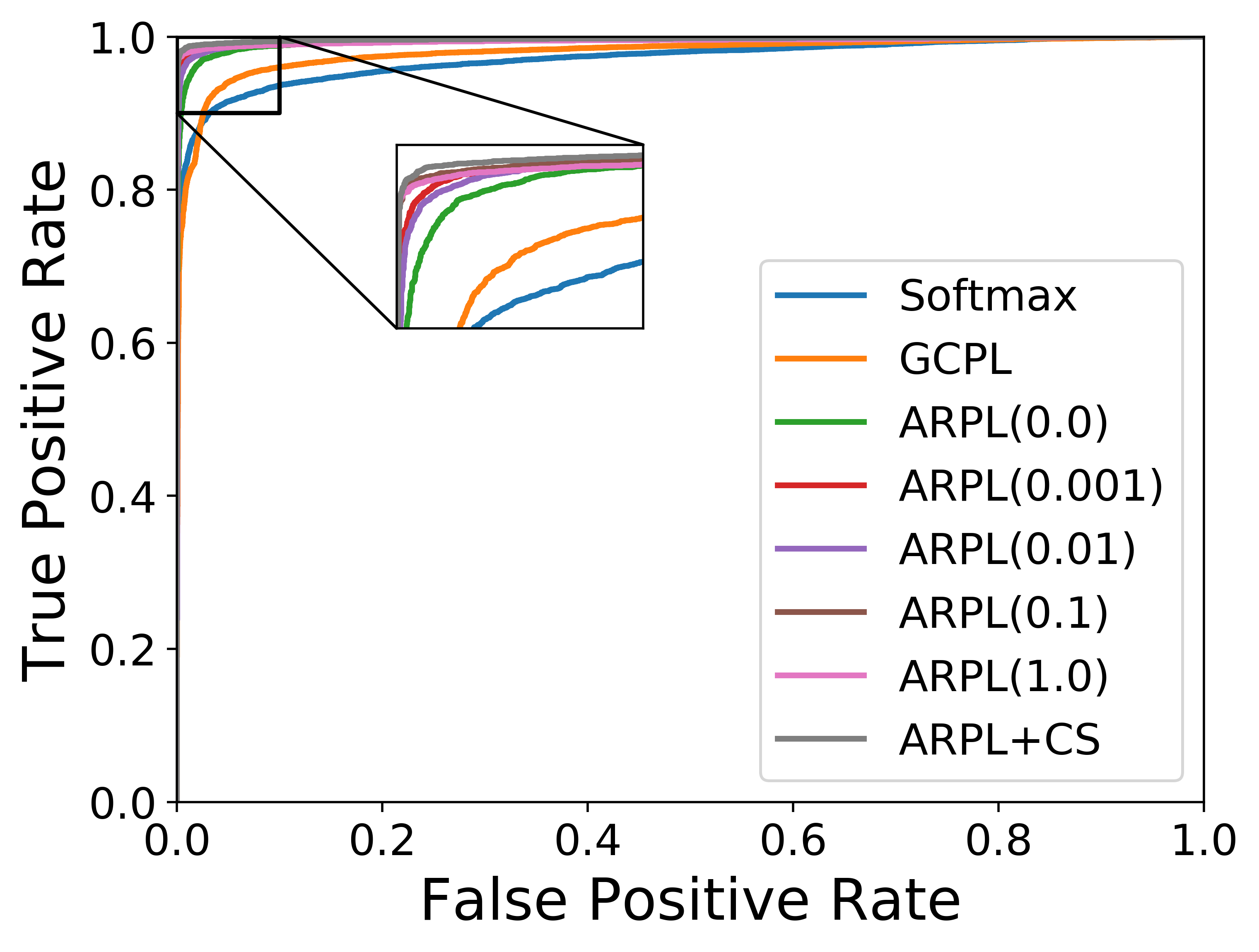}
				\includegraphics[width=\linewidth]{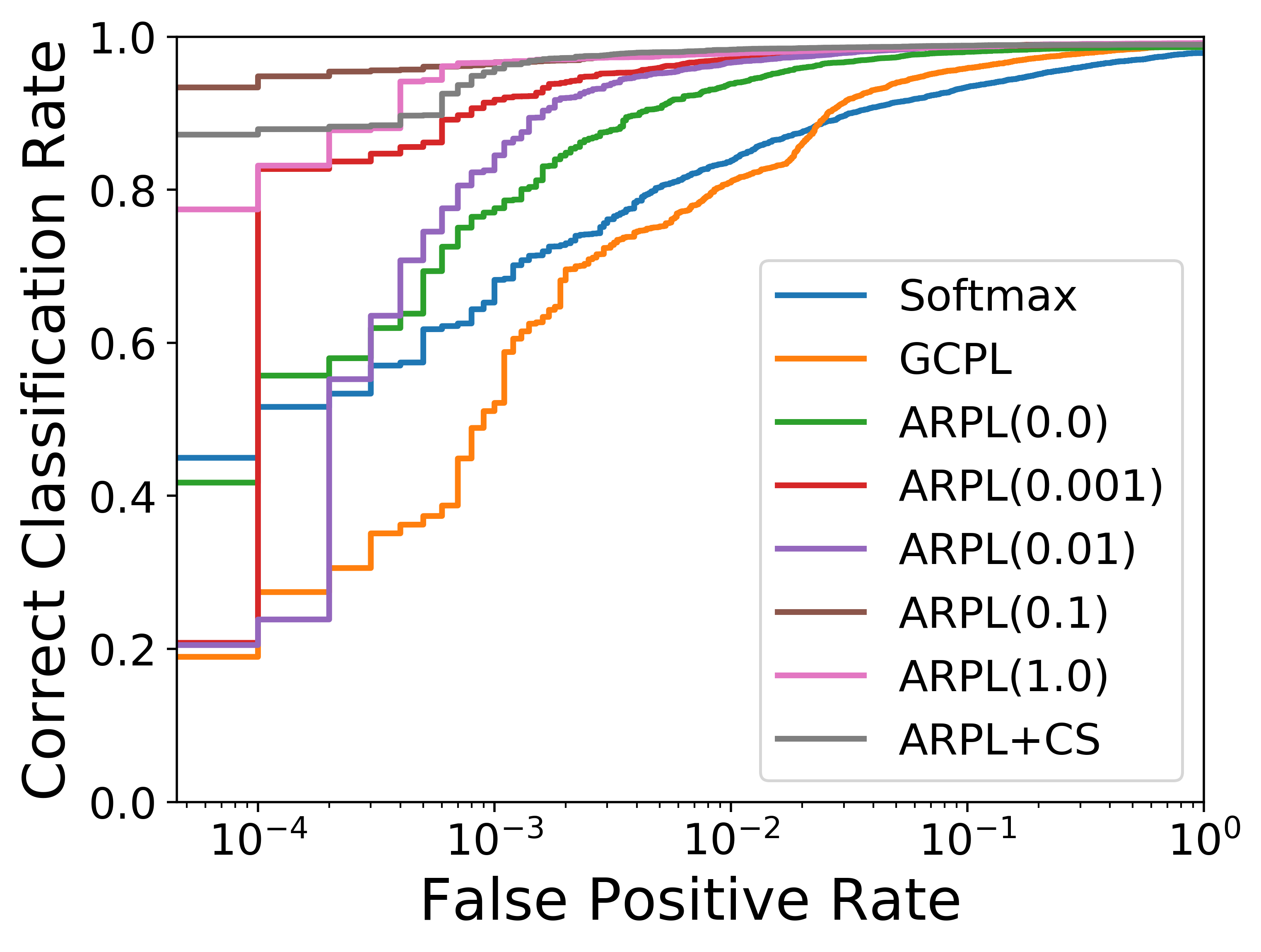}
				\label{fig:mnist_svhn_auc}
			\end{minipage}%
		}%
		\subfigure[known:MNIST, unknown: SVHN]{
			\begin{minipage}[t]{0.31\linewidth}
				\centering
				\includegraphics[width=\linewidth]{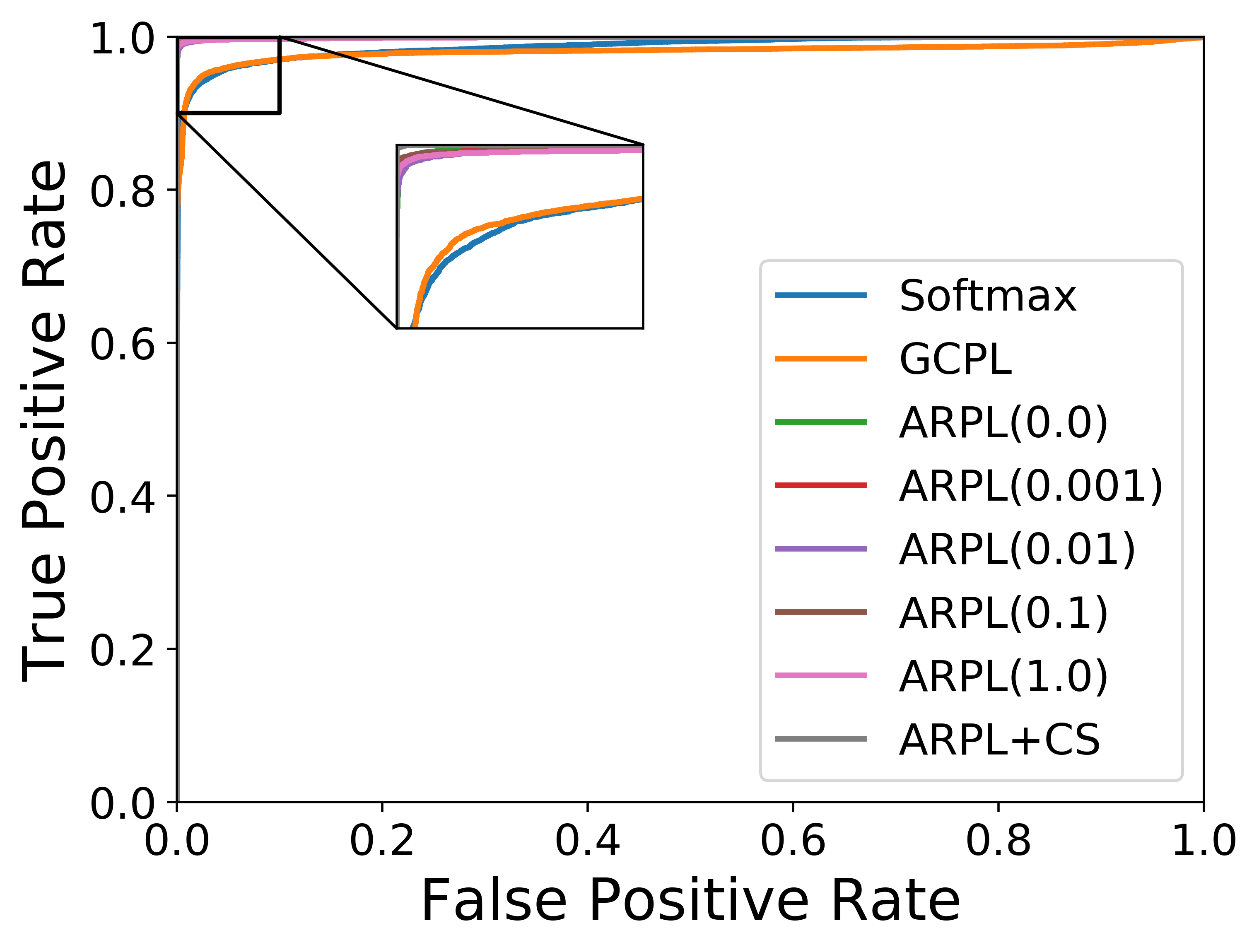}
				\includegraphics[width=\linewidth]{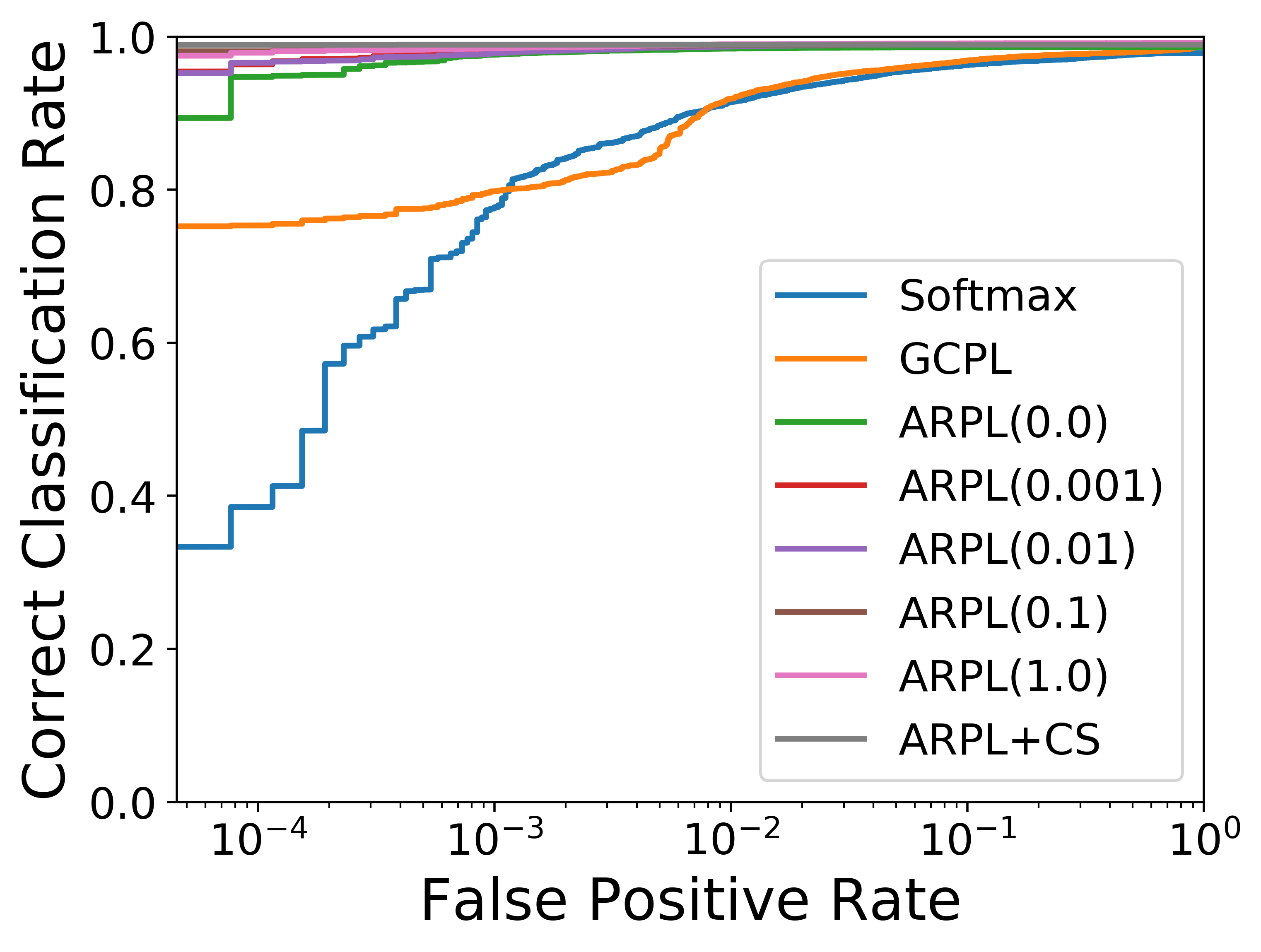}
				\label{fig:svhn_auc}
			\end{minipage}%
		}%
		\subfigure[known:MNIST, unknown: CIFAR100]{
			\begin{minipage}[t]{0.31\linewidth}
				\centering
				\includegraphics[width=\linewidth]{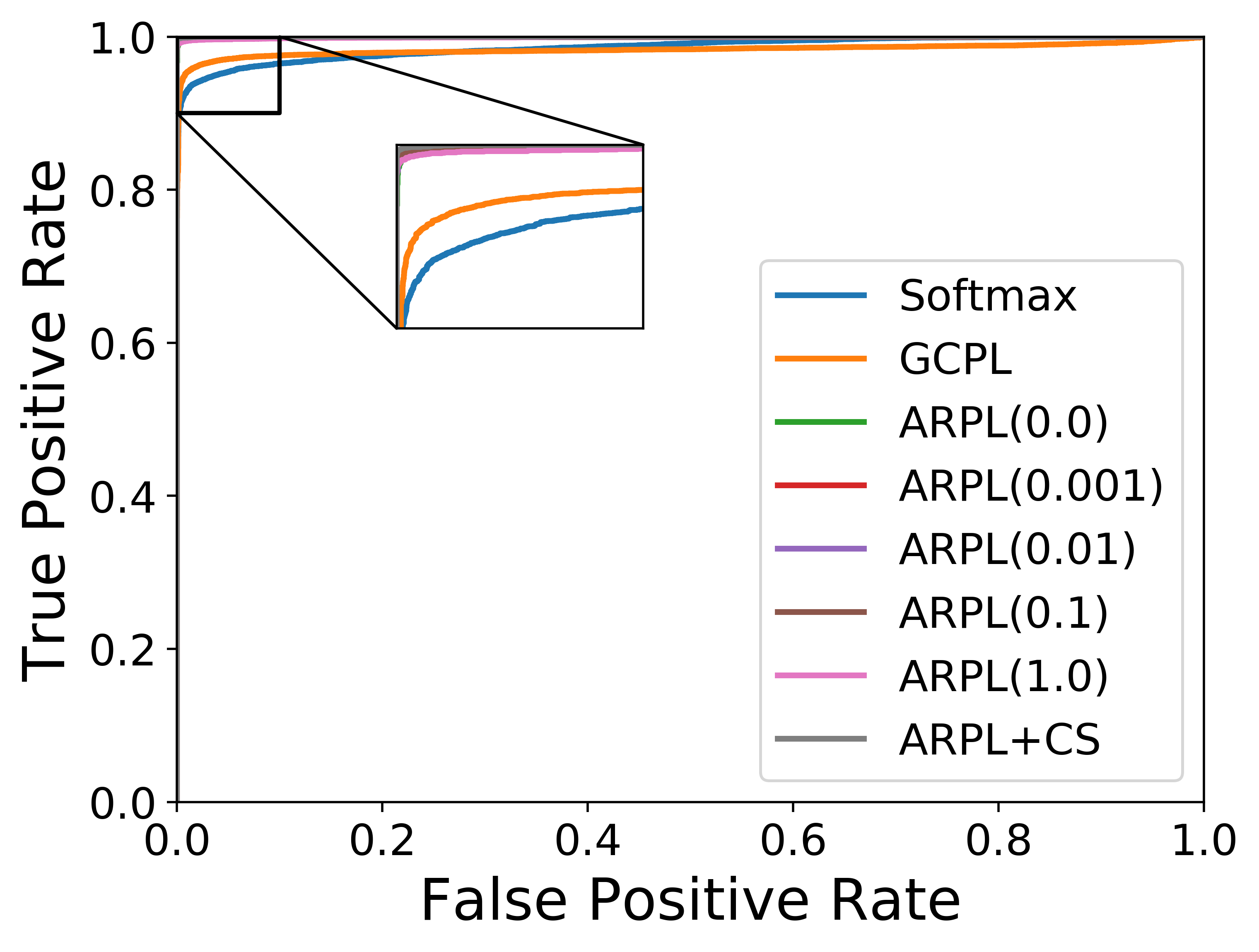}
				\includegraphics[width=\linewidth]{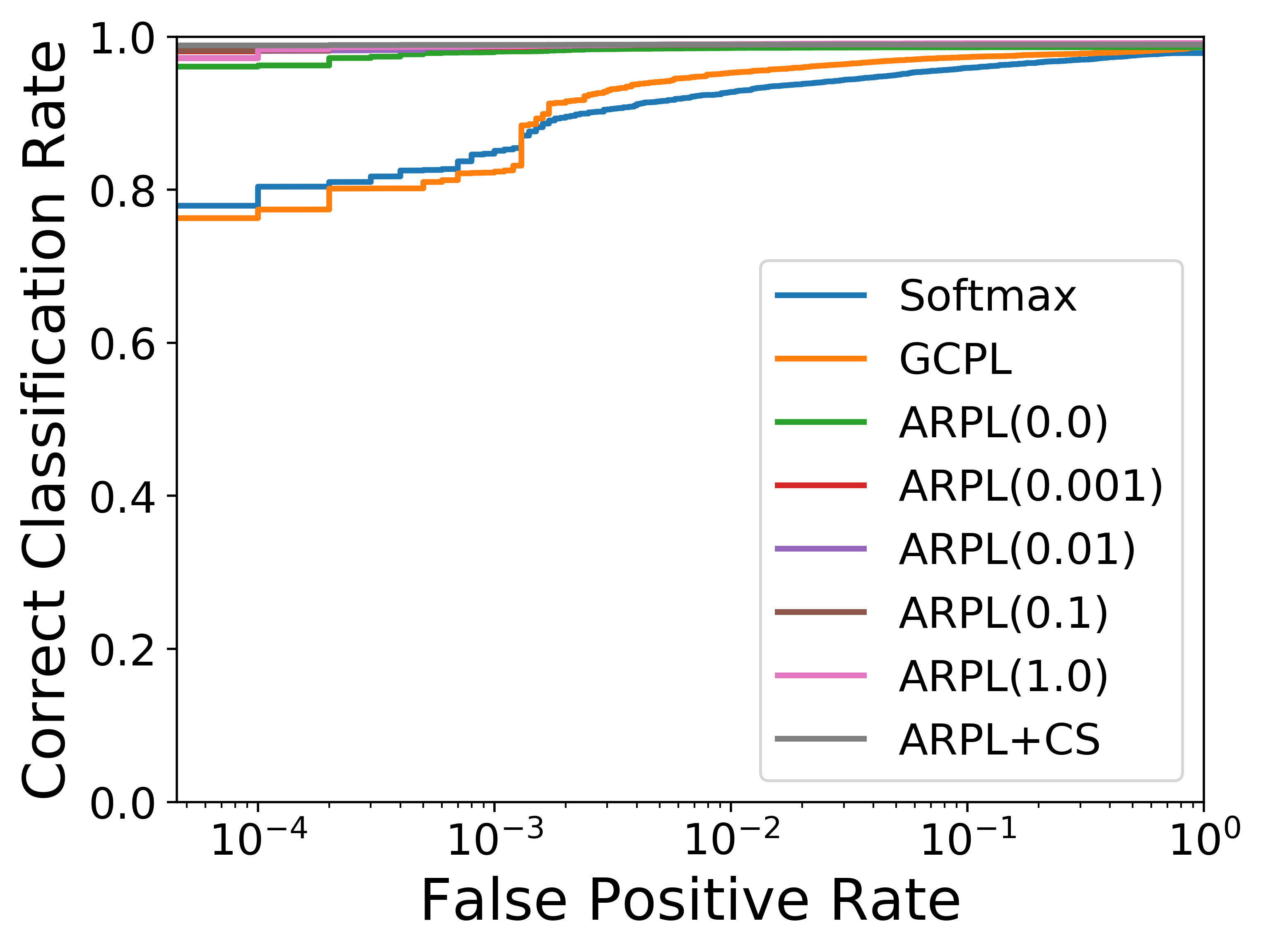}
				\label{fig:cifar100_auc}
			\end{minipage}%
		}%
		\caption{(1) The first row is the Area Under the Receiver Operating Characteristic (AUROC) applied to the data from MNIST as known and KMNIST, SVHN, CIFAR100 as unknown.  (2) The second row is Open Set Classification Rate curves provided for the same algorithms. Compared with AUROC, more significant differences could be observed through Open Set Classification Rate curves.}
		\label{fig:auc_oscr}
	\end{figure*}
	
	\subsubsection{ARPL vs. RPL.} 
	As shown in Table~\ref{tab:osdi}, Table~\ref{tab:osr} and Table~\ref{tab:ood}, ARPL has great improvement over RPL \cite{chen_2020_ECCV}. 
	Compared with RPL, ARPL improves the similarity estimation and reduces the open space risk through an elastic bounded space. 
	First, for the distance between the feature and reciprocal points in Eq. \eqref{Eqn:distance}, the angle metric is added.
	Each known class is opposite to its reciprocal points in terms of spatial position and angle direction. 
	Comparing Fig.~\ref{fig:RPL} and Fig.~\ref{fig:ARPL_0}, there are is a larger space between the features of each class in ARPL($\lambda$=0), and each class is more compact. 
	Adding the angle metric effectively reduces the intraclass distance to achieve better performance with different $\lambda$ as shown in Fig.~\ref{fig:lambda}.
	Second, we no longer limit the distance between the known classes and the corresponding reciprocal points to the same margin, and we use adaptive regularization in Eq. \eqref{Eqn:regularization}. 
	The stronger limitation in \cite{chen_2020_ECCV} will reduce the network‘s discriminability for all known classes.
	The performance of classification and unknown detection will be reduced if this restriction is too large.
	It also is sensitive to the setting of hyperparameter $\lambda$ in \cite{chen_2020_ECCV}. 
	Compared with ARPL and RPL+AMC in Fig.~\ref{fig:lambda}, the performance of RPL and RPL+cosine are more affected by $\lambda$ and their performances are more unstable.
	For the AMC, the learnable value $R$ is used as the anchor. 
	By constantly adjusting the reciprocal points and the deep feature, all known classes are promoted to less than $R$, so as to focus on samples that are difficult to distinguish among unknowns adaptively. 
	Through this adversarial margin constraint in Eq. \eqref{Eqn:regularization}, the neural network can no longer focus on the samples that meet conditions in Eq. \eqref{Eqn:reg}, and pay more attention to optimizing the bounded samples.

	\subsubsection{ARPL vs. ARPL + Confusing Samples.}
	From the experiment on OOD detection,  CS improves ARPL more for detecting far OOD. 
	Additionally, as shown in Fig.~\ref{fig:ARPL_01} and Fig.~\ref{fig:ARPL_OSS}, confusing samples make the difference between MNIST, SVHN and CIFAR100 even greater. 
	Note that SVHN and MNIST have the same class, the numbers 0-9, but they are also accurately detected as unknown classes. 
	The main reason for this difference is the difference of the image domain, color images \emph{vs.} black and white images. 
	This also demonstrates that the proposed method has the ability to reject data from different domains.
	For KMNIST, ARPL+CS does not seem to bring much improvement in feature visualization in Fig.~\ref{fig:ARPL_OSS}.
	The samples from KMNIST are difficult for the classifier to distinguish, and have more similarities in the shape and structure with those of MNIST.
	However, ARPL+CS still improves the detection ability for these confusing samples, as shown in Table~\ref{tab:ood} and Fig.~\ref{fig:auc_oscr}.
	In general, confusing samples effectively improve the ability of ARPL to detect various unknown categories while ensuring the accuracy of known class classification.
	
	\begin{figure}[htb]
		\centering
		\begin{minipage}[t]{0.48\linewidth}
			\centering
			\includegraphics[width=\linewidth]{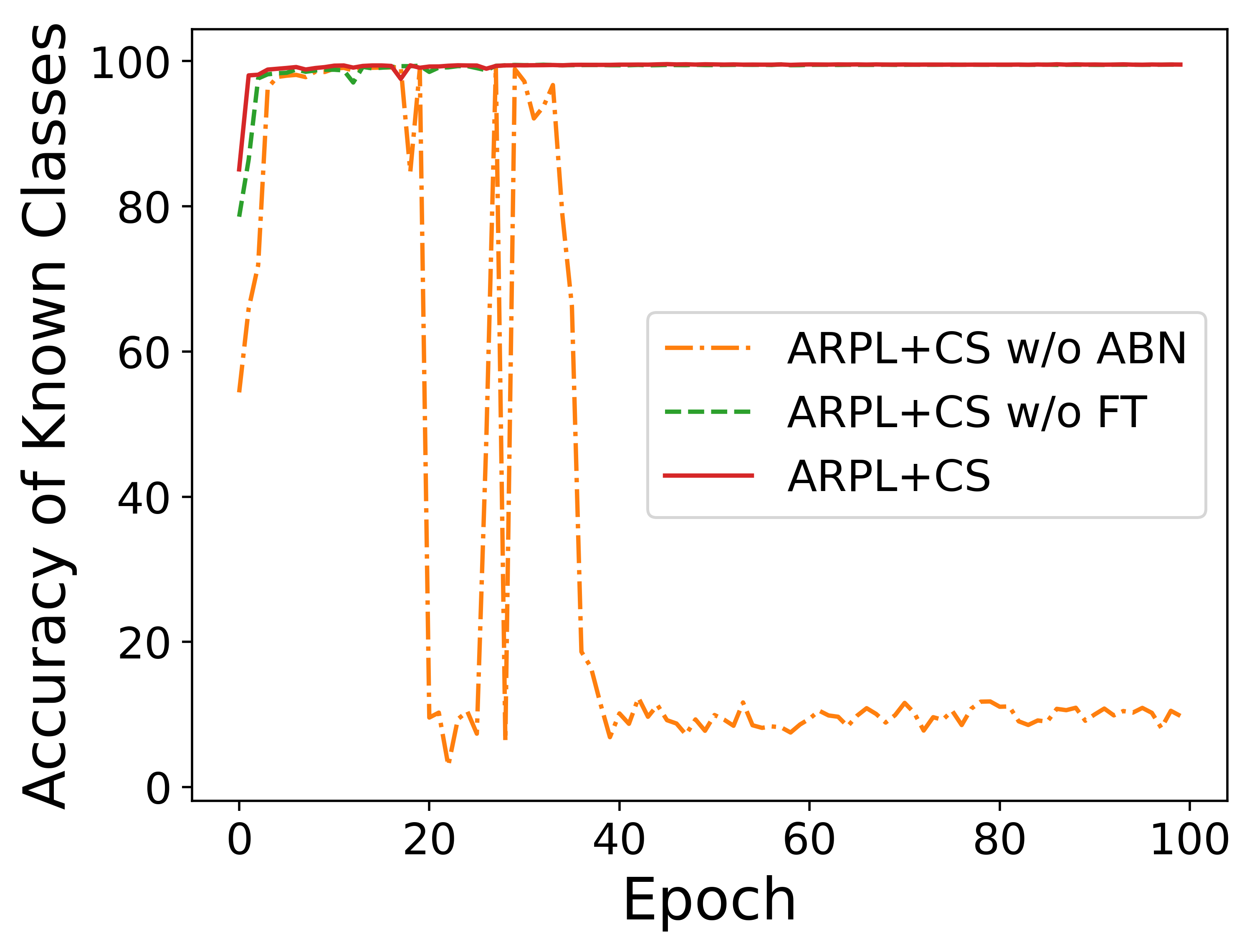}
			\label{fig:ABNFT_ACC}
		\end{minipage}%
		\begin{minipage}[t]{0.48\linewidth}
			\centering
			\includegraphics[width=\linewidth]{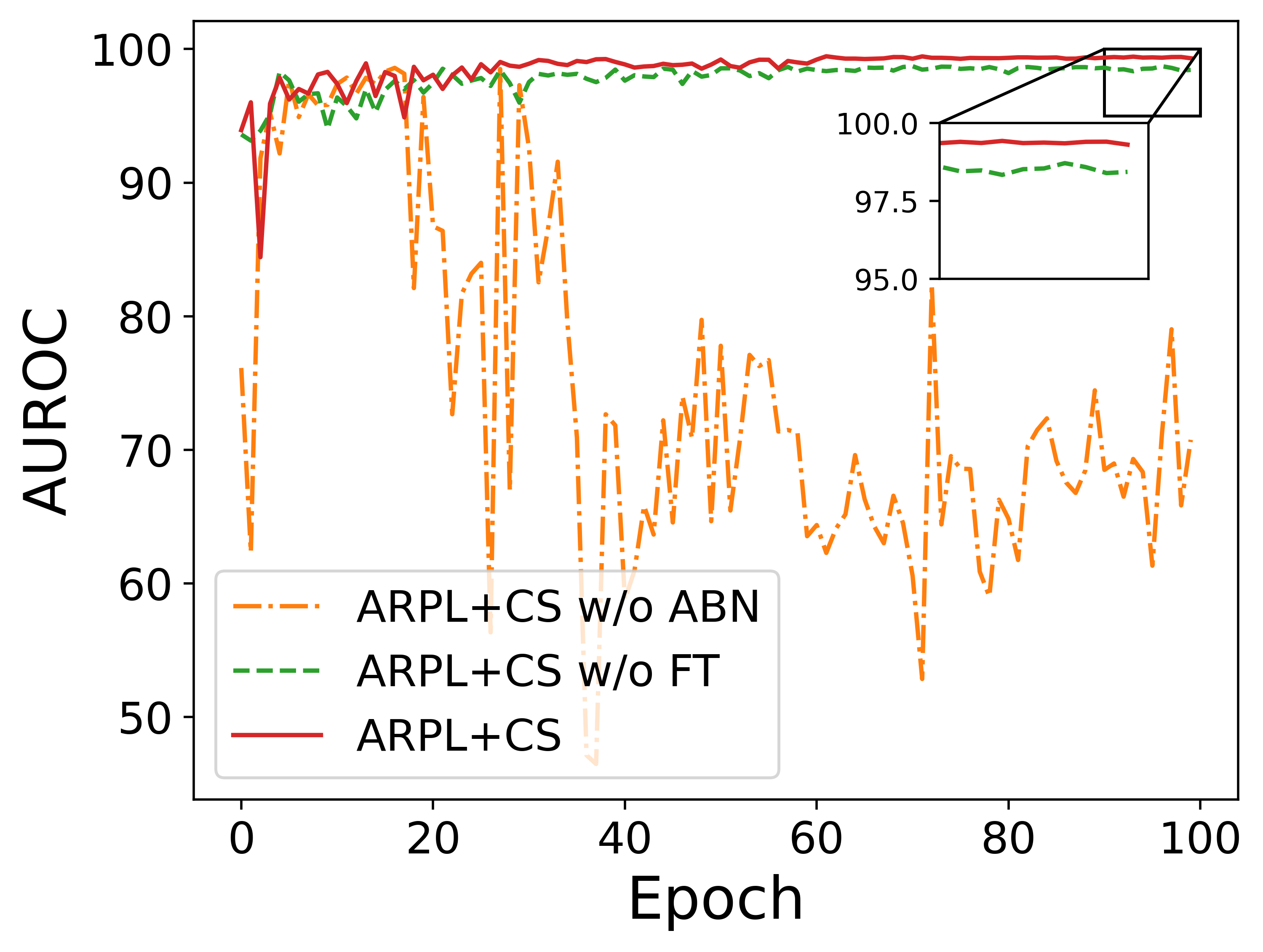}
			\label{fig:ABNFT_AUC}
		\end{minipage}%
		\vspace{-0.5cm}
		\caption{The influence of ABN and FT on the accuracy of known and AUROC for detecting unknown in the training process, where MNIST is the known dataset and KMNIST is the unknown dataset. ABN can ensure that confining samples do not made a negative effect on the classifier, and FT further enhances the ability to distinguish unknown classes.}
		\label{fig:abn}
	\end{figure}
	
	\subsubsection{ABN \& FT}
	The training process would not be stable if the two opposite constraints in Sec.~\ref{sec:gan} were added directly. Hence, we propose ABN and FT for reliability enhancement. 
	As shown in Fig.~\ref{fig:abn}, the training of ARPL without ABN finally collapses on MNIST. 
	The known and unknown classes use their own BN independently, so the training process of the known classes will not be affected when the distributions of unknown samples and known samples are very different. 
	With stability training procedure of ABN, FT further enhances the ability to distinguish known and unknown classes. With ABN and FT, ARPL+CS can improve the ability of neural networks to distinguish known classes and the discriminability of judging various unknown classes, simultaneously.

	\subsubsection{Analysis of the Margin.}
	Different datasets need different sizes for deep feature space to ensure that known and unknown can be classified correctly.
	Fig.~\ref{fig:known} proves that the margin increases with the number of known classes with fixed $\lambda$.
	Moreover, the distributions of features learned under different numbers of known classes still are discriminative for known classes and unknown classes as shown in Fig.~\ref{fig:vis}.
	As shown in Fig.~\ref{fig:multi-classes}, ARPL can also retain better performance for detecting unknown classes compared with softmax and GCPL with different numbers of known classes. 
	Softmax and GCPL only enhance the separability of features, and cannot distinguish known and unknown classes sufficiently well. 
	ARPL can make the known and unknown classes distinguishable by pushing the known samples far away from the potential unknown space.
	This phenomenon demonstrates the rationality of the spatial distribution learned for multiple classes.
	ARPL can effectively control the interaction among different known classes, by learning a more appropriate embedding space size.
	As a result, the previous conclusion about ARPL still holds for different numbers of known classes.
	
	Furthermore, for open set recognition, the fixed known classes are provided in the training, while the unknown classes are infinite. As mentioned above, reciprocal points are modeled for most unknown distribution which is  different from the provided known classes. However, there are some unknown classes that are very similar to known classes, and even it is hard to distinguish them through the human eye. In this case, the neural network will be easily affected by the limited known priors, resulting in generating a high feature response. This is also the reason that the upper left part of Fig.\ref{fig:retrieval} is affected by the open set data from KMNIST. We will explore these challenges in future research.

	\begin{figure}[!tb]
		\centering
		\subfigure[]{
			\begin{minipage}[b]{0.48\linewidth}
				\centering
				\includegraphics[width=\linewidth]{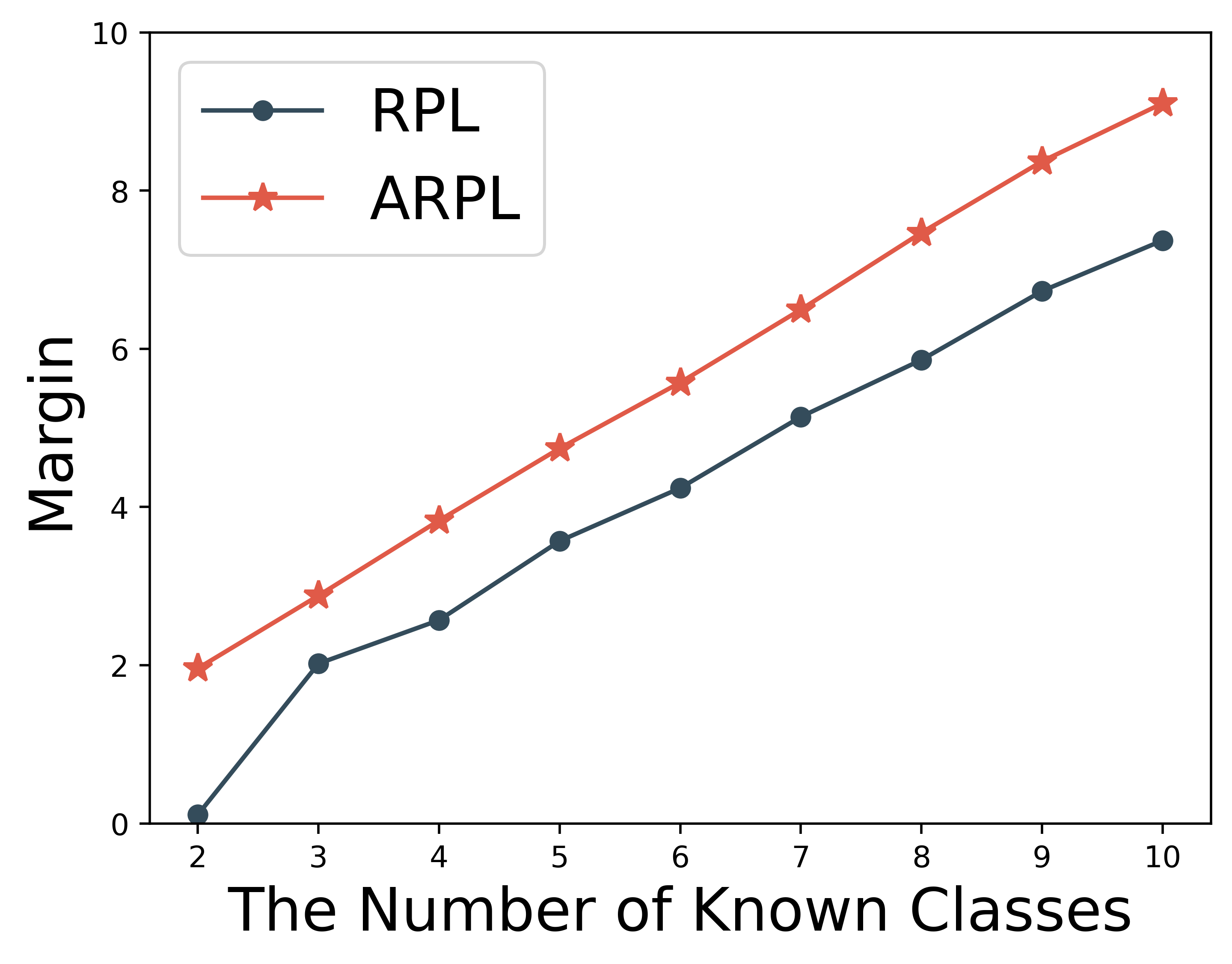}
				\label{fig:known}
			\end{minipage}%
		}
		\subfigure[]{
			\begin{minipage}[b]{0.48\linewidth}
				\centering
				\includegraphics[width=\linewidth]{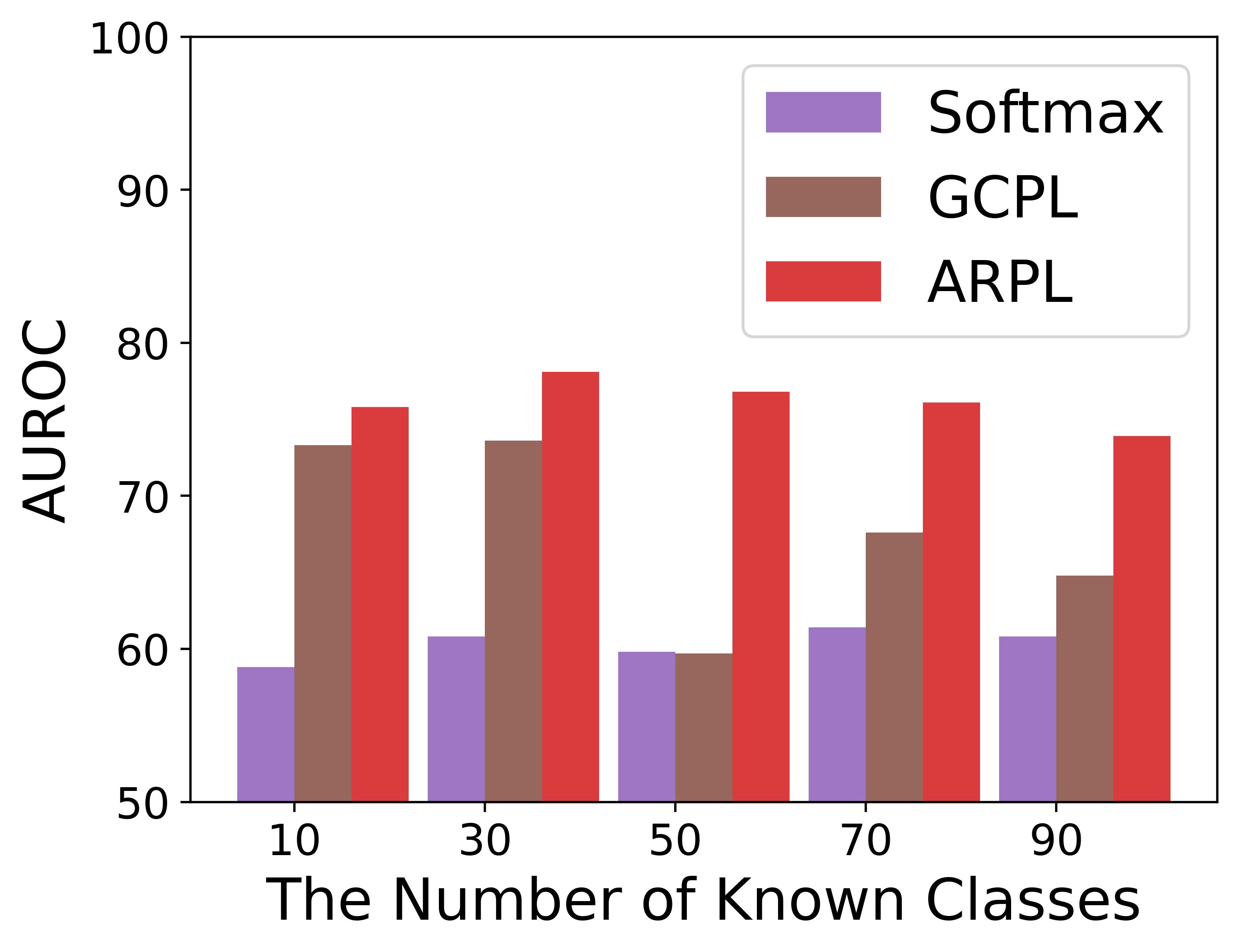}
				\label{fig:multi-classes}
			\end{minipage}%
		}
		\caption{(a): The variation trend of margin with the number of known classes. (b): The AUROC performance based on the different number of known classes on CIFAR100, where remaining classes as unknown.}
		\label{fig:classes}
	\end{figure}

	\begin{figure*}[!tb]
		\centering
		\subfigure[$K = 2$]{
			\begin{minipage}[t]{0.24\linewidth}
				\centering
				\includegraphics[width=\linewidth]{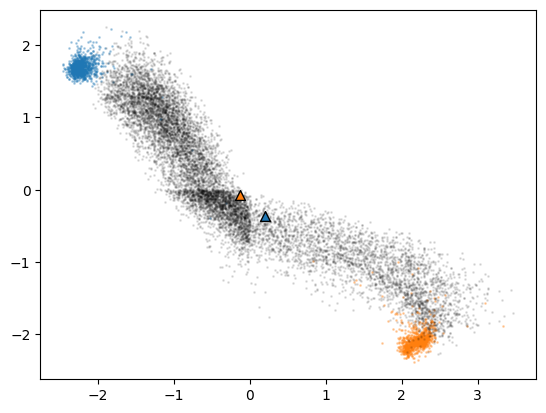}
				\label{fig:ARPL_2}
			\end{minipage}%
		}%
		\subfigure[$K = 4$]{
			\begin{minipage}[t]{0.24\linewidth}
				\centering
				\includegraphics[width=\linewidth]{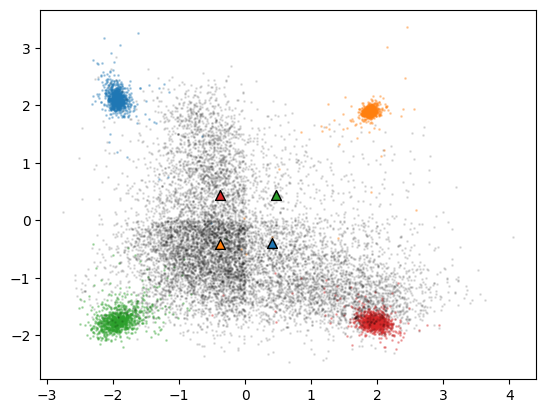}
				\label{fig:ARPL_4}
			\end{minipage}%
		}%
		\subfigure[$K = 6$]{
			\begin{minipage}[t]{0.24\linewidth}
				\centering
				\includegraphics[width=\linewidth]{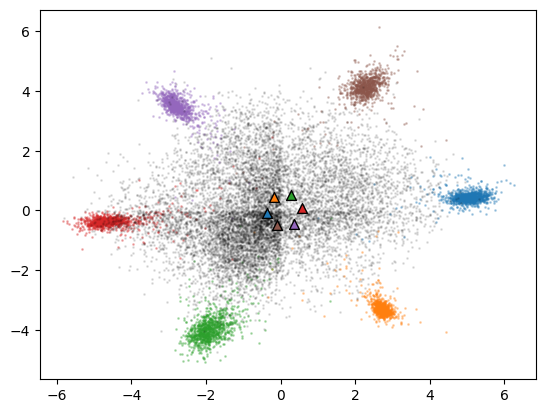}
				\label{fig:ARPL_6}
			\end{minipage}%
		}%
		\subfigure[$K = 8$]{
			\begin{minipage}[t]{0.24\linewidth}
				\centering
				\includegraphics[width=\linewidth]{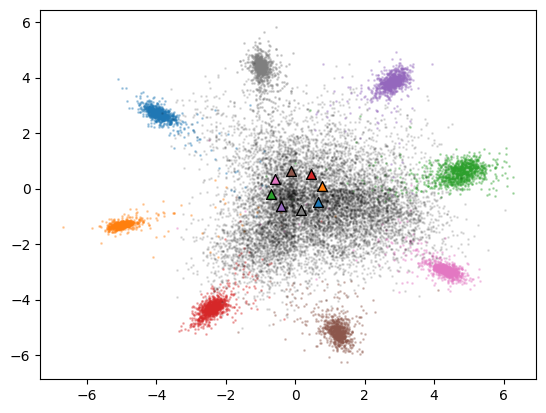}
				\label{fig:ARPL_8}
			\end{minipage}%
		}
		\caption{
			The learned representations of ARPL with different numbers of known classes. The data is from MNIST by randomly sampling $K$ known classes and $10-K$ unknown classes.  Colored triangles represent the learned reciprocal points of different known classes.}
		\label{fig:vis}
	\end{figure*}
	
	\subsection{Further Analysis}
	\label{EXP:FA}

	\subsubsection{Analysis of Closed Set Recognition.}
	We adopt ResNet with 34 layers \cite{he2016deep} for closed set recognition on CIFAR10, CIFAR100, and \textbf{Air}craft \textbf{300} (Air-300) \cite{chen_2020_ECCV}.
	Air-300 contains 320,000 annotated color images from 300 different classes in total.
	Each category contains 100 images least, and a maximum of 10,000 images, which leads to a long-tailed distribution.
	All classes are divided into two parts with 180 known classes for training and 120 novel unknown classes for testing respectively.
	In contrast to the existing benchmark datasets, the tailored Air-300 dataset maintains a long-tailed distribution to simulate the real visual world.
	Here, we focus on the closed set accuracy of the model in 180 known categories.
	The images of Air-300 are center-cropped and resized to 64x64 in this experiment.
	
	As shown in Table~\ref{tab:close}, ARPL achieves comparable performance with traditional softmax and the prototype learning method GCPL. 
	The interclass distance between the known classes is increased by using reciprocal points to push the known classes away from the global open space, so that the neural network can learn more discriminative features for the known classes through ARPL.
	This demonstrates the effectiveness of ARPL for conventional closed set recognition tasks. 
	Moreover, by incorporating CS, the closed set accuracy for ARPL+CS is not decreased and it can even perform better ARPL on CIFAR100 and Air-300.
	ARPL+CS is not affected by confusing samples, which largely depends on the proposed ABN and FT.
	Without ABN or FT, the closed set accuracy can decrease because of the deviation from confusing samples.
	These results demonstrate that ARPL and ARPL+CS can improve the ability of neural networks to distinguish known classes and the discriminability of judging various unknown classes.
	
	\begin{table}[!tb]
		\caption{Test accuracy of different methods on CIFAR10, CIFAR100 and Air-300. The best results are indicated in bold.}
		\centering
		\label{tab:close}
		\begin{tabular}{ccccc}
			\specialrule{.16em}{0pt} {.65ex}
			Method & CIFAR10 & CIFAR100 & Air-300 \\
			\midrule
			Softmax & 93.1 & 70.8 & 92.4 \\
			GCPL & 93.3 & 70.3 & 92.3 \\
			RPL & 93.8 & 71.8 & 92.9 \\
			ARPL & \textbf{94.1} & 72.1 & 94.5 \\
			ARPL+CS(w/o ABN) & 94.0 & 71.8 & 93.2 \\
			ARPL+CS(w/o FT) & 93.1 & 71.5 & 93.0 \\
			ARPL+CS & 94.0 & \textbf{72.8} & \textbf{94.7} \\
			\specialrule{.16em}{.4ex}{0pt}
		\end{tabular}
	\end{table}
	
	\subsubsection{Semantic Shift versus Non-semantic Shift}
	More complex open set scenarios is explored through a large-scale data set DomainNet \cite{peng2019moment}. 
	DomainNet has high-resolution images in 345 classes from six different domains. 
	There are three domains in the dataset with class labels available when the experiments are conducted. 
	These are real, clipart, and quickdraw, and they result in different types of distribution shifts.
	To create subsets with semantic shifts, all classes are separated into two splits.
	Split A has class indices from 0 to 172, while split B has indices from 173 to 334.
	Our experiments use real-A for in-distribution and the other subsets for out-of-distribution.
	With the definition given in \cite{hsu2020generalized}, real-B has a semantic shift from real-A, while clipart-A has a nonsemantic shift.
	Clipart-B therefore has both types of distribution shift.
	We train the ResNet with 34 layers \cite{he2016deep} for 100 epochs with batch size 128 and an SGD optimizer with momentum 0.9.
	The learning rate starts at 0.01 and decreases by a factor of 0.1 in the training process every 30 epochs.
	The images are center-cropped and resized to 80x80 in this experiment.

	The results in Table~\ref{tab:domain} reveal some trends.
	The first is that the OOD datasets with both types of distribution shifts are easier to detect, followed by nonsemantic shifts.
	The second observation is that ARPL can effectively detect all distribution shifts, in contrast to softmax.
	In particular, for OSCR, ARPL achieves a good performance improvement.
	Finally, confusing samples play an important role in different domains and can improve the detection performance of ARPL. 
	The near domain can obtain a larger improvement from confusing samples. 
	
	\begin{table}[!tb]
		\caption{Performance of three methods using DomainNet. The known is the real-A subset. The type of distribution shift presents a trend of difficulty to the OOD detection problem: Semantic shift (S) $>$ Non-Semantic Shift (NC) $>$ Semantic + Non-Semantic shift.}
		\centering
		\small
		\label{tab:domain}
		\begin{tabular}{ccccc}
			\specialrule{.16em}{0pt} {.65ex}
			OOD & \multicolumn{2}{c}{Shift} & AUROC & OSCR \\
			\midrule
			& S & NS & \multicolumn{2}{c}{Softmax / ARPL / ARPL+CS} \\
			\midrule
			real-B & \checkmark & & 72.3/74.2/\textbf{75.2} & 41.7/60.8/\textbf{61.9} \\
			clipart-A & & \checkmark & 66.4/70.9/\textbf{72.7} & 41.4/58.0/\textbf{59.4} \\ 
			clipart-B & \checkmark & \checkmark & 77.0/81.1/\textbf{82.9} & 45.3/65.3/\textbf{66.6} \\
			quickdraw-A & & \checkmark & 77.9/86.1/\textbf{86.7} & 44.6/68.5/\textbf{69.0}\\
			quickdraw-B & \checkmark & \checkmark & 79.5/87.4/\textbf{87.5} & 45.5/69.4/\textbf{69.5} \\
			\specialrule{.16em}{.4ex}{0pt}
		\end{tabular}
	\end{table}

	\subsubsection{Experiments on ImageNet.}
	
	To better compare our method with traditional softmax, we conduct experiments on the larger and more difficult ImageNet-1K dataset \cite{ILSVRC15}. ImageNet-1K includes 1000 classes with more than 1,200,000 training images and 50K validation images. Moreover, ImageNet-O \cite{hendrycks2019natural} is adopted as the out-of-distribution dataset for ImageNet-1K. ImageNet-O includes 2K examples from ImageNet-22K \cite{ILSVRC15} excluding ImageNet-1K. ResNet 18 \cite{he2016deep} is trained on ImageNet-1K and tested on both ImageNet-1K and ImageNet-O.
	
	As shown in Table~\ref{tab:imagenet}, ARPL performs better than traditional softmax, GCPL and RPL even on the large and difficult datasets, in terms of both close-set accuracy (ACC) and unknown detection (AUROC). In particular, for unknown detection, ARPL achieves an approximately 12\% improvement over softmax. Due to the constraints on the global open space imposed by the reciprocal points, our method can ensure a better separation of known and unknown classes while ensuring the accurate recognition of known classes. These results show the excellent scalability of ARPL in larger scale datasets.
	
	\begin{table}
		\caption{Open set recognition performance of different methods on the larger and more difficult datasets, where ImageNet-1K as the known dataset and ImageNet-O as the unknown dataset.}
		\centering
		\label{tab:imagenet}
		\begin{tabular}{cccc}
			\specialrule{.16em}{0pt} {.65ex}
			Method & ACC & AUROC & OSCR \\
			\midrule
			Softmax & 69.6 & 48.2 & 42.4 \\
			GCPL & 63.0 & 56.0 & 43.0 \\
			RPL & 69.8 & 59.4 & 48.6 \\
			ARPL & \textbf{70.2} & \textbf{60.0} & \textbf{48.9} \\
			\specialrule{.16em}{.4ex}{0pt}
		\end{tabular}
	\end{table}
	
	\subsubsection{Extension for Class-Incremental Learning.}
	To validate the potential  of reciprocal points in incremental learning, we train ResNet with 34 layers to classify CIFAR100.
	We assume that a classifier is pretrained on a certain number of base classes and new classes with corresponding datasets are incrementally provided one by one.
	In this incremental scenario, half of the CIFAR100 classes are set as base classes and the rest are set as new classes.
	The experiment are conducted in five times where class splits are randomly generated in each time, and then the averaged results are reported.
	Since the reciprocal points of the base classes are distributed in the unknown deep feature space which may contain the novel classes, we try to use the reciprocal points to represent the novel classes.
	For comparison, we consider different sets of bases to represent base classes and novel classes, where the bases are learned by other methods. Specifically,
	for the softmax classifier, the weight of the last linear layer is used as a set of bases, so that the logit is the coefficient based on these bases.
	Similarly, the prototypes and reciprocal points of base classes can be different sets of bases to represent different classes and their logit scores are used for classification.

	\begin{figure}
		\centering
		\includegraphics[width=0.8\linewidth]{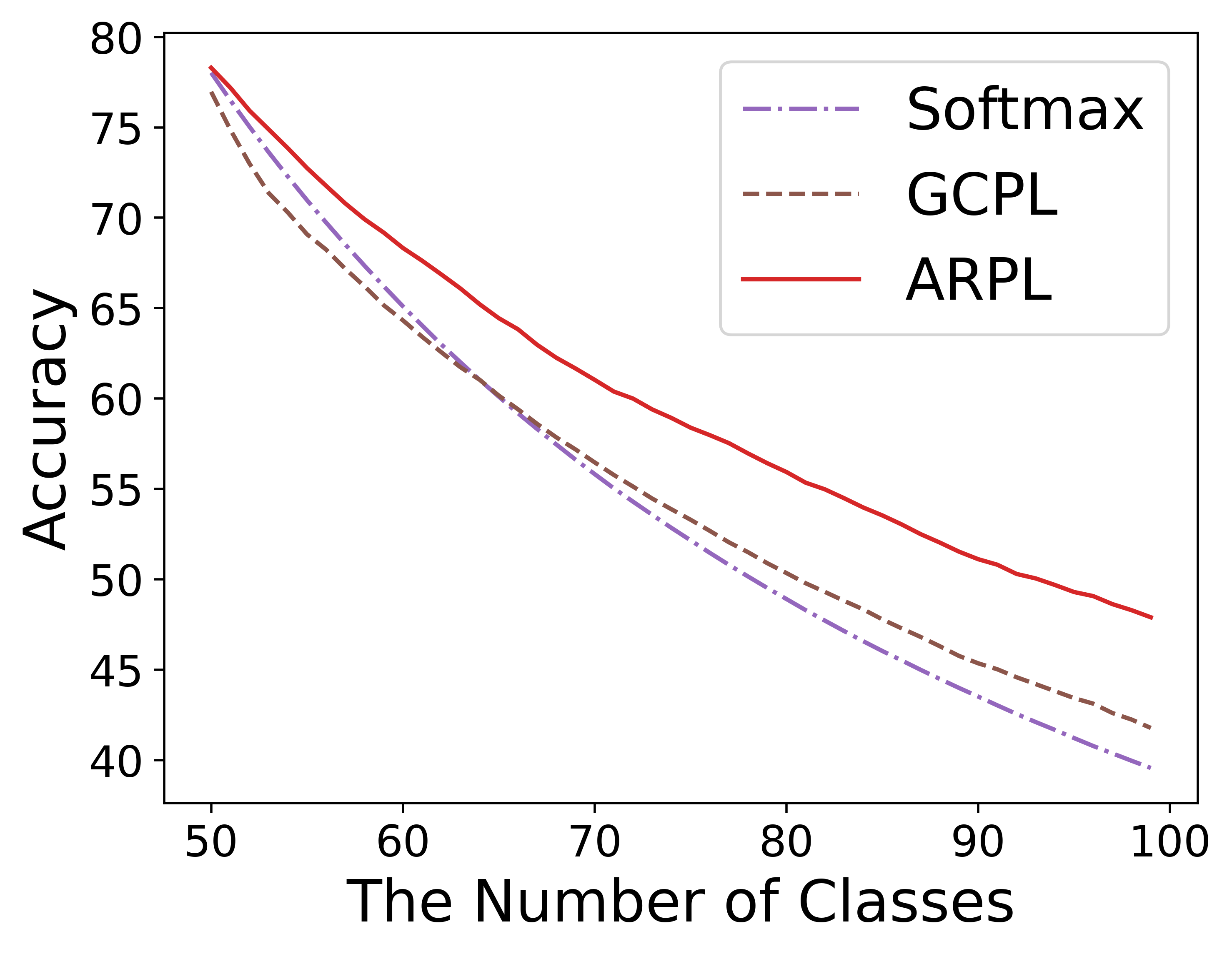}
		\caption{Experimental results of class-incremental learning on CIFAR100. We report the overall accuracy after the last new class is added.}
		\label{fig:incremental}
		\vspace{-0.5cm}
	\end{figure}
	
	The experiments are evaluated by  overall accuracy and the results are shown in 
	Fig.~\ref{fig:incremental}.
	The proposed reciprocal points outperform the other methods by a significant margin, as the number of new classes increases.
	In the case of similar initial accuracy, reciprocal points can better resist catastrophic forgetting.
	The experimental results demonstrate the superiority of our reciprocal points.
	Further potential of reciprocal points in incremental learning will be explored in the future.

	\section{Conclusion}
	This paper formulates the open space risk from the perspective of multiclass integration, by introducing a novel concept, the \emph{Reciprocal Point}, to model the extraclass space corresponding to each known category. 
	We introduce a novel learning framework, Adversarial Reciprocal Point Learning, to promote a reliable open set neural network. 
	Specifically, a classification framework with the adversarial margin constraint is introduced to reduce the empirical classification risk and the open space risk. 
	The rationality of the adversarial margin constraint is theoretically guaranteed by Theorem \ref{theorem:multi}.
	Furthermore, to estimate the unknown distribution from the open space, an instantiated adversarial enhancement is designed to generate more diverse confusing training samples from the confrontation between the known data and reciprocal points.
	Our methods break the closed-world assumption in traditional neural networks and adopt open-world reciprocal points for discrimination between known and unknown samples.
	Extensive experiments conducted on multiple datasets demonstrate that our method outperforms previous state-of-the-art open set classifiers in all cases.  
	
	This paper also reveals that the recognition of unknown classes by a neural network is mostly based on known priors, so the distribution of unknown classes is more aggregated in the low-response area of deep feature space, while the known classes are distributed in the high response space. This is very similar to the observation that the neocortical areas obtain structured knowledge from the hippocampus through interleaved learning \cite{kumaran2016learning}. In the future, we will explore more details about the neural mechanism of few shot learning, and then utilize it to improve the ability of the neural network to detect and learn unknown categories.

	\section*{Acknowledgments}
	This work is partially supported by grants from the Key-Area Research and Development Program of Guangdong Province under contact No.2019B010153002, and grants from the National Natural Science Foundation of China under contract No. 61825101 and No. 62088102.
	
	\bibliographystyle{unsrt}
	\bibliography{main}

\end{document}